\def\eqref#1{equation~\ref{#1}}
\def\1{\bm{1}}
\DeclareMathAlphabet{\mathsfit}{\encodingdefault}{\sfdefault}{m}{sl}
\SetMathAlphabet{\mathsfit}{bold}{\encodingdefault}{\sfdefault}{bx}{n}
\numberwithin{equation}{section}
\declaretheorem[name=Theorem,numberwithin=section]{theorem}
\theoremstyle{remark}
\newtheorem*{remark*}{Remark}
\crefname{equation}{Equation}{Equations}
\crefname{theorem}{Theorem}{Theorems}
\crefname{lemma}{Lemma}{Lemmas}
\crefname{proposition}{Proposition}{Propositions}
\crefname{corollary}{Corollary}{Corollaries}
\crefname{definition}{Definition}{Definitions}
\crefname{property}{Property}{Properties}
\crefname{remark}{Remark}{Remarks}
\crefname{algorithm}{Algorithm}{Algorithms}
\crefname{section}{Section}{Sections}
\Crefname{section}{Section}{Sections}
\newcommand{\samethanks}[1][\value{footnote}]{\footnotemark[#1]}
\title{Unsupervised Conformal Inference: Bootstrapping and Alignment to Control LLM Uncertainty}
\author[1]{Lingyou Pang\thanks{Correspondence: \texttt{lyopang@ucdavis.edu}}}
\author[1]{Lei Huang}
\author[2]{Jianyu Lin}
\author[2]{Tianyu Wang}
\author[1]{Akira Horiguchi}
\author[1]{Alexander Aue\thanks{These authors jointly supervised this work.}}
\author[2]{Carey E.~Priebe\samethanks}
\affil[1]{\textit{Department of Statistics, University of California, Davis}}
\affil[2]{\textit{Department of Applied Mathematics and Statistics, Johns Hopkins University}}
\date{} 
\begin{document}
\maketitle
\begin{abstract}
Deploying black-box LLMs requires managing uncertainty in the absence of token-level probability  or true labels. We propose introducing an unsupervised conformal inference framework for generation, which integrates: generative models, incorporating: (i) an LLM-compatible atypical score derived from  response-embedding Gram matrix, (ii) UCP combined with a bootstrapping variant (BB-UCP) that aggregates residuals to refine quantile precision while maintaining distribution-free, finite-sample coverage, and (iii) conformal alignment, which calibrates a single strictness parameter $\tau$ so a user predicate (e.g., factuality lift) holds on unseen batches with probability $\ge 1-\alpha$. Across different benchmark datasets, our gates achieve close-to-nominal coverage and provide tighter, more stable thresholds than split UCP, while consistently reducing the severity of hallucination, outperforming lightweight per-response detectors with similar computational demands. The result is a label-free, API-compatible gate for test-time filtering that turns geometric signals into calibrated, goal-aligned decisions.

\end{abstract}



\section{Introduction}
\label{sec:intro}

Reliable \emph{uncertainty quantification (UQ)} for large language models (LLMs) is needed for trustworthy AI. An assertive yet baseless claim can swiftly spread and cause damage, but for most practitioners, frontier models arrive only as black-box APIs with no access to gradients, exact log probabilities, or hidden states \citep{Lin2024Generating}. Hence deployment teams must make keep-or-discard decisions from samples alone.

In black-box deployments, LLM uncertainty must be inferred from the sampled outputs themselves. Query-only signals include: (i) semantic-entropy methods that quantify dispersion across equivalence classes of responses and are effective for hallucination detection \citep{farquhar2024semanticentropy,kossen2024semanticentropyprobesrobust}; (ii) self-consistency, which uses agreement among independently sampled answers as a proxy for confidence \citep{wang2023selfconsistencyimproveschainthought,wang2025llmwebdynamicstracing}; and (iii) geometry-based measures computed from response embeddings—e.g., local density or Gram-volume statistics—that correlate with quality and robustness \citep{qiu2024semanticdensityuncertaintyquantification,li2025semanticvolumequantifyingdetecting}. Because these signals require neither logits nor gradients, they are natural conformity scores for our unsupervised conformal calibration; in parallel, conformal wrappers for language modeling and factuality control are emerging \citep{quach2024conformallanguagemodeling,mohri2024languagemodelsconformalfactuality}.

\emph{Conformal prediction (CP)} is model-agnostic and supplies finite-sample, distribution-free guarantees \citep{angelopoulos2022gentleintroductionconformalprediction,vovk2005algorithmic}. 
However, the generative workflow breaks the classical supervised setting: prompts are not quantifiable covariates. A practical procedure must therefore calibrate in a \emph{label-only} regime and use few, parallelizable API calls to expensive models.

We introduce a practical \emph{unsupervised conformal prediction (UCP)} framework that increases data efficiency, reduces computation via bootstrapped conformal calibration, and reconciles heterogeneous modalities through conformal alignment, while delivering distribution-free, finite-sample guarantees and strong empirical gains in hallucination detection and factuality. 
Our framework calibrates directly on raw outputs and is compatible with black-box APIs:
\begin{enumerate}
\item an LLM-friendly \emph{atypicality} score based on inner-product interaction energy of the response-embedding Gram matrix (unit-norm, cosine), yielding a bounded, exchangeability-compatible conformity score;
\item batched unsupervised conformal procedures—split UCP for single-batch queries and new batched variants (\emph{B-UCP } and bootstrap-stabilized \emph{BB-UCP })—with finite-sample coverage under batch exchangeability and improved stability/efficiency over split UCP;
\item \emph{conformal alignment}: a batch-level calibration of a single strictness knob~$\tau$ that ensures any predicate (e.g., factuality improvement) holds on unseen batches with probability at least $1-\alpha$, enabling label-free test-time gating

\end{enumerate}
The structure of the paper is outlined as follows. §\ref{sec:background} explores query-only UQ and the foundational unsupervised CP techniques. In §\ref{sec:meth}, we introduce the concepts of Gram-matrix typicality, batched/bootstrapped calibration, and conformal alignment. Experimental outcomes are presented in §\ref{sec:exp}, while §\ref{sec:conclusion} discusses conclusions and future research directions.

\section{Background}
\label{sec:background}

\subsection{Conformal Prediction for Generative Outputs}
\label{subsec:cp-background}

Conformal prediction (CP) is a model-agnostic, distribution-free method that turns arbitrary scores into set-valued inferences with finite-sample guarantees under a common exchangeability assumption \citep{VovkBook2005,angelopoulos2022gentleintroductionconformalprediction,lei2017distributionfreepredictiveinferenceregression}.
Classical assumptions can be relaxed via covariate-shift and dependence-aware extensions \citep{barber2023conformalpredictionexchangeability,gibbs2021adaptiveconformalinferencedistribution}. Conformal Risk Control \citep{angelopoulos2025conformalriskcontrol} not only addresses coverage but also aims to manage expected losses. Recent developments in language models use conformal calibration to enhance modeling accuracy, concentrating on removing and verifying claims \citep{quach2024conformallanguagemodeling,mohri2024languagemodelsconformalfactuality,cherian2024largelanguagemodelvalidity}.

Building on the concepts in \citep{VovkBook2005,lei2017distributionfreepredictiveinferenceregression,lei2012distributionfreepredictionbands,Sadinle_2018}, we adopt the formal definitions of \emph{Full-UCP} and \emph{Split-UCP} as presented in \citet{WassermanConformal}. The corresponding algorithms also appear in the Appendix. We observe the responses $Y_{1:n}$ paired with fixed prompts $X_{1:n}$ and a future pair $(X_{n+1}, Y_{n+1})$. We primarily calibrate on the responses $Y$ (i.e., model outputs), treating prompts/contexts $X$ as covariates that may be implicit. We treat $Y_{1:n}$ as exchangeable. We define residuals via a permutation-invariant map $\phi$ by $R_i=\phi(Y_i;\mathcal S_i)$. For tolerance $\alpha\in(0,1)$, we target $\Pr{Y_{n+1}\in C_n}\ge 1-\alpha$.

\textbf{Full-UCP} augments data with a candidate \(y_{n+1}\), and residuals are recalculated for the augmented set \(\{Y_1,\ldots,Y_n,y_{n+1}\}\). A conformal \(p\)-value is then derived from the residual ranking, incorporating \(y\) when \(\pi(y)\ge\alpha\). \textbf{Split-UCP} partitions the data into sets \(\mathcal D_1=\{Y_i\colon i\in I_1\}\) and \(\mathcal D_2=\{Y_i\colon i\in I_2\}\), where index sets \(I_1\) and \(I_2\) partition the set \(\{1,\ldots,n\}\). We then calculate calibration residuals \(R_i=\phi(Y_i;\mathcal D_1)\) for \(i\in I_2\) to establish the $(1-\alpha)(1+\frac{1}{|I_2|})$-quantile $q$ of these residual values, subsequently yielding the set \(C_n=\{y\colon \phi(y;\mathcal D_1)\le q\}\). In addition, the Split-UCP marginal theorem states that under exchangeability of \((Y_{1:n},Y_{n+1})\), the resulting set $C_n$ satisfies \(\Pr\{Y_{n+1}\in C_n\}\ge 1-\alpha\).


Full-UCP can be computationally inefficient due to retraining and ineffective searching over future candidates \(y\) (grid/root-finding). On the other hand, Split-UCP is data-inefficient as only the calibration split \(\mathcal D_2\) influences the quantile, leading to sample-splitting costs.
The challenges motivate us to design our framework for better alignment with the generative stochastic process \(P(Y\mid X)\) to ensure that practical applications effectively capture underlying variability.

\subsection{Gram Matrix Construction, Inner–Product Energy, and Atypical Score}
\label{subsec:gram-inner-energy}

To quantify LLM response uncertainty, we build our framework on the response-embedding Gram matrix. Given $n$ responses $Y_1,\dots,Y_n$ with embeddings $v_i\coloneqq \psi(Y_i)\in\mathbb{R}^d$, we stack the embeddings as rows to create the matrix $V\in\mathbb{R}^{n\times d}$ with $V_{i,:}=v_i^\top$ and form the (uncentered) Gram $G \coloneqq V V^\top \in \mathbb{R}^{n\times n}$ with entries $G_{ij}=\langle v_i, v_j\rangle$, where unit–norm embeddings $\|v_i\|_2=1$ by default. 

We then define the inner–product (interaction) energy:
\begin{equation}
\label{eq:row-energy}
\tag{2.1}
e(i;G) \;:=\; \|G_{:,i}\|_2
\;=\;\|V\,v_i\|_2 .
\end{equation}
In unit–norm embeddings, $e(i;G)=\bigl(\sum_{j=1}^n \cos^2\theta_{ij}\bigr)^{1/2}$, where $\theta_{ij}$ denotes the angle between $v_i$ and $v_j$. Thus, $e(i;G)^2$ quantifies the total squared directional alignment of $v_i$ with its peers. Since $\cos^2$ treats aligned and anti‑aligned directions equally, a \textit{large $e(i;G)$} indicates central, redundant elements (high agreement), while a \textit{small $e(i;G)$} denotes unique or irrelevant content (high novelty). 
The following theorem states that $e(\cdot;G)$ ranges from $1$ to $\sqrt n$ (the proof is available in the Appendix).

\begin{restatable}[Unit–norm interaction–energy bound]{theorem}{unitnormbound}
\label{thm:unit-norm-bound}
If $\|v_i\|_2=1$ for all $i$, then $1\le e(i;G)\le \sqrt{n}$ for each $i$.
Equality $e(i;G)=\sqrt{n}$ holds when $v_i$ is perfectly aligned with all $v_j$; $e(i;G)=1$ when $v_i$ is orthogonal to all $v_j$ for $j\neq i$.
\end{restatable}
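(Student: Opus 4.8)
The plan is to reduce everything to the elementary identity
\[
e(i;G)^2 \;=\; \|V v_i\|_2^2 \;=\; \sum_{j=1}^n \langle v_j, v_i\rangle^2,
\]
which holds because the $j$-th coordinate of the vector $V v_i \in \mathbb{R}^n$ is exactly $\langle v_j, v_i\rangle$; equivalently this is the $\sum_j \cos^2\theta_{ij}$ expression already recorded in the text, since $\|v_i\|_2 = \|v_j\|_2 = 1$. Every summand is nonnegative, so the bound will follow by controlling the sum term-by-term from below (using the diagonal term) and from above (using Cauchy--Schwarz).

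For the lower bound I would isolate the $j=i$ term, which equals $\langle v_i, v_i\rangle^2 = \|v_i\|_2^4 = 1$ by the unit-norm hypothesis. Dropping the remaining nonnegative terms gives $e(i;G)^2 \ge 1$, hence $e(i;G) \ge 1$. Equality in $e(i;G)=1$ forces $\sum_{j \ne i}\langle v_j, v_i\rangle^2 = 0$, i.e. $\langle v_j, v_i\rangle = 0$ for every $j \ne i$, which is precisely the stated orthogonality condition. For the upper bound I would apply Cauchy--Schwarz to each term, $\langle v_j, v_i\rangle^2 \le \|v_j\|_2^2\,\|v_i\|_2^2 = 1$, and sum over the $n$ indices to get $e(i;G)^2 \le n$, hence $e(i;G) \le \sqrt{n}$. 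Equality requires $\langle v_j, v_i\rangle^2 = 1$ for all $j$, i.e. the Cauchy--Schwarz equality case $v_j = \pm v_i$ for every $j$; this is the "perfectly aligned" configuration.

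There is essentially no obstacle here beyond bookkeeping: the computation is a one-line identity plus Cauchy--Schwarz. The only point deserving a sentence of care is the description of the equality case for the upper bound — because $\cos^2$ identifies a direction with its negation, saturating $e(i;G)=\sqrt n$ only pins down $v_j$ up to sign, so I would state it as $v_j = \pm v_i$ and note that the informal phrase "perfectly aligned with all $v_j$" is meant in this up-to-sign sense. I would also remark that $n \ge 1$ makes the two bounds consistent (they coincide exactly when $n=1$, where the single vector is trivially aligned with itself).
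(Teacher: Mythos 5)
Your argument is correct and is essentially the paper's own proof: both expand $e(i;G)^2=\sum_j\langle v_i,v_j\rangle^2$, take the diagonal term (equal to $1$ under unit norm) for the lower bound, and bound each off-diagonal term by $1$ (Cauchy--Schwarz, i.e.\ $\cos^2\theta_{ij}\le 1$) for the upper bound, with the same equality characterizations. Your remark that saturation of the upper bound only pins down $v_j=\pm v_i$ matches the caveat the paper itself makes in its proof (``aligned or anti-aligned''), so nothing further is needed.
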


Letting $B_E$ denote the supremum of $e(\cdot;G)$, we define the Atypical Score as
\begin{equation}
\label{eq:atypical}
\tag{2.2}
\Phi(i;G) \;:=\; 1 - \frac{e(i;G)}{B_E} \;\in [0,1] .
\end{equation}
Under unit-norm embeddings, the upper bound $B_E$ equals  $\sqrt{n}$. 
Note that any strictly monotone transform of $e$ is equivalent for ranking. 
\section{Methodology}
\label{sec:meth}

\subsection{Batch Unsupervised Conformal Prediction}
\label{subsec:bbucp}

We adapt UCP to a \emph{batched} setting that gathers information across exchangeable batches and may stabilize calibration using a within-batch bootstrap. For notational simplicity, assume there are \(n+1=(J{+}1)I\) responses $Y_1,\dots,Y_{n+1}$ for pre-chosen integers $J$ and $I$.
Partition the $n+1$ responses into \(J{+}1\) disjoint batches \(\mathcal{B}_j=\{Y_{j,1},\dots,Y_{j,I}\}\) for \(j=1,\dots,J{+}1\).
The entire \S\ref{sec:meth} assumes \emph{batch exchangeability}, i.e., that the $J+1$ batches \(\mathcal{B}_1,\dots,\mathcal{B}_{J},\,\mathcal{B}_{J{+}1}\) are i.i.d.\ and that within each batch the responses are exchangeable.
(The latter condition is also called \textit{partial exchangeability} \citep{de1938condition}.)
Thus we will assume without loss of generality that $Y_{n+1} = Y_{J{+}1, I}$.


We design two statistically distinct methods under this setting:
\begin{itemize}[leftmargin=1.3em,itemsep=0.25em]
\item \emph{Batch-UCP (B‑UCP).} For each calibration batch \(j\), residuals are computed within batches against leave‑one‑out Gram matrix base, where \(R_{j,i}=\phi(Y_{j,i};\mathcal{B}_{j,-i})\) using a bounded, permutation‑invariant score \(\phi\!\in[0,B_\phi]\). Pool \(\{R_{j,i}\}_{j=1{:}J,\; i=1{:}I}\) and take a single adjusted conformal quantile. 

\item \emph{Batch Bootstrap-UCP (BB‑UCP).} For each calibration batch \(j\), bootstrap the empirical residual multiset \(\{R_{j,i}\}_{i=1}^{I}\) to obtain \(\{S_{j,k}\}_{k=1}^{K}\). Pool \(\{S_{j,k}\}_{j,k}\) and apply the same adjusted quantile. The bootstrap mitigates noise caused by irregularity from outlier batches, maintaining exchangeability.
\end{itemize}
The block below clarifies how B‑UCP and BB‑UCP differ. We present the formal algorithms in the Appendix (Algorithms~\ref{alg:app-bucp}–\ref{alg:app-bbucp}).

\begin{algorithm}[H]
  \caption*{Unified Batch U-CP (demo; \(K{=}0\Rightarrow\) B-UCP, \(K{\ge}1\Rightarrow\) BB-UCP)}
  \begin{algorithmic}[1]
    \State \textbf{Inputs:} \(\{Y_{k}\}_{k=1}^{(J+1)I-1}\), score \(\phi\), batch count \(J\), tolerance \(\alpha\), bootstrap count \(K\ge0\)
    \State \textbf{Partition:} calibration \(\{\mathcal{B}_j\}_{j=1}^{J}\), hold-out \(\mathcal{B}_{J+1,-I}\)
    \For{$j=1$ \textbf{to} $J$}
      \For{$i=1$ \textbf{to} $I$} \(\;R_{j,i}\gets \phi\!\big(Y_{j,i};\,\mathcal{B}_{j,-i}\big),\;\mathcal{B}_{j,-i}=\mathcal{B}_j\setminus\{Y_{j,i}\}\) \EndFor
      \If{$K>0$} draw \(\{S_{j,k}\}_{k=1}^{K}\) from \(\{R_{j,i}\}_{i=1}^{I}\) \EndIf
    \EndFor
    \State \textbf{Bag:} \(\mathcal{D}\gets \{R_{j,i}\}_{j,i}\) if \(K=0\); else \(\mathcal{D}\gets \{S_{j,k}\}_{j,k}\)
    \State \textbf{Quantile:} \(\delta_J=(J{+}1)\alpha-1\); set \(q\gets B_\phi\) if \(\delta_J\le0\), else \(q\gets\) \((1-\delta_J/J)\)-quantile of \(\mathcal{D}\)
    \State \textbf{Output:} \(C_n=\{y:\phi(y;\mathcal{B}_{J+1,-I})\le q\}\)
  \end{algorithmic}
\end{algorithm}

Under the batch exchangeability assumption, we have the following coverage guarantees, and we present the proofs in the Appendix.

\begin{restatable}[B‑UCP coverage]{theorem}{bucp}
\label{thm:bucp}
The prediction set $C_n$ returned by Batch U‑CP satisfies
\(
  \Pr\{Y_{n+1}\in C_n\}\ge 1-\alpha.
\)
\end{restatable}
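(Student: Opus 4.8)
The plan is to carry out the leave‑one‑out conformal argument at the level of \emph{batches}, invoking batch exchangeability twice: once across the $J{+}1$ i.i.d.\ batches and once within the test batch. First, dispose of the degenerate case $\delta_J\le 0$: the algorithm then outputs $q=B_\phi$, and since $\phi\in[0,B_\phi]$ the set $C_n=\{y:\phi(y;\mathcal{B}_{J+1,-I})\le B_\phi\}$ is the entire response space, so $\Pr\{Y_{n+1}\in C_n\}=1\ge 1-\alpha$. Assume henceforth $\delta_J=(J{+}1)\alpha-1>0$.

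Introduce the (unobserved) test residual $R_{J+1,I}:=\phi(Y_{J+1,I};\mathcal{B}_{J+1,-I})$, so that $Y_{n+1}\in C_n\iff R_{J+1,I}\le q$; it suffices to show $\Pr\{R_{J+1,I}>q\}\le\alpha$. For each batch $j=1,\dots,J{+}1$ define the full residual vector $(R_{j,1},\dots,R_{j,I})$ with $R_{j,i}=\phi(Y_{j,i};\mathcal{B}_{j,-i})$ and the multiset $\mathcal{M}_j=\{R_{j,i}\}_{i=1}^I$. Two structural facts drive the proof: (i) since $\phi$ is permutation‑invariant in its reference set and the within‑batch responses are exchangeable, each vector $(R_{j,1},\dots,R_{j,I})$ is exchangeable; (ii) since $\mathcal{M}_j$ is a function of $\mathcal{B}_j$ alone and the batches are i.i.d., the multisets $\mathcal{M}_1,\dots,\mathcal{M}_{J+1}$ are i.i.d., hence exchangeable.

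The key step is a symmetrization. Let $q_{-j}$ denote the $(1-\delta_J/J)$‑quantile of the pooled multiset $\mathcal{D}_{-j}:=\bigcup_{j'\ne j}\mathcal{M}_{j'}$, so that the quantity $q$ computed by the algorithm is exactly $q_{-(J+1)}$. Using fact (ii) to relabel which batch is the test batch and fact (i) to relabel which within‑batch position is the distinguished "$I$"th slot, the pair $(R_{J+1,I},q)$ has the same law as $(R_{j^\star,i^\star},q_{-j^\star})$ for every fixed $(j^\star,i^\star)$; averaging over $(j^\star,i^\star)$ uniform on $\{1,\dots,J{+}1\}\times\{1,\dots,I\}$ gives
\[
\Pr\{R_{J+1,I}>q\}=\mathbb{E}\!\left[\frac{1}{(J{+}1)I}\sum_{j=1}^{J+1}\sum_{i=1}^{I}\mathbf{1}\{R_{j,i}>q_{-j}\}\right].
\]
It then remains to establish a deterministic bound: for any configuration of $J{+}1$ size‑$I$ multisets, $\sum_{j,i}\mathbf{1}\{R_{j,i}>q_{-j}\}\le(J{+}1)\alpha I$. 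Indeed, with the usual order‑statistic convention $q_{-j}$ leaves at most $\lfloor\delta_J I\rfloor$ elements of $\mathcal{D}_{-j}$ strictly above it; restoring the (at most $I$) elements of $\mathcal{M}_j$, at most $\lfloor\delta_J I\rfloor+I$ elements of the full pool $\mathcal{D}^+=\bigcup_j\mathcal{M}_j$ exceed $q_{-j}$, hence every occurrence counted in the double sum lies among the $\lfloor\delta_J I\rfloor+I$ largest occurrences of $\mathcal{D}^+$, so the sum is at most $\lfloor\delta_J I\rfloor+I\le(\delta_J{+}1)I=(J{+}1)\alpha I$. Dividing by $(J{+}1)I$ and taking expectations yields $\Pr\{R_{J+1,I}>q\}\le\alpha$, i.e.\ $\Pr\{Y_{n+1}\in C_n\}\ge1-\alpha$.

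The step I expect to be the main obstacle is the symmetrization: one must check that $q$ depends only on the \emph{pooled} calibration residuals (a multiset, hence invariant under reordering batches and reordering within batches), whereas $R_{J+1,I}$ is attached to one distinguished position inside one distinguished batch, and that the across‑batch and within‑batch exchangeability invariances can be applied jointly without altering the functional being evaluated. The combinatorial step is routine, but its floor/ceiling bookkeeping is precisely what pins down the quantile level $1-\delta_J/J$: the "$+I$" contributed by deleting the test batch is exactly what turns the naive factor $J$ into $(J{+}1)\alpha$.
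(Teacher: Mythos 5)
Your proof is correct, but it takes a genuinely different route from the paper's. The paper proves Theorem~\ref{thm:bucp} by the conformal-risk-control device its appendix remark alludes to: it defines indicator losses $L_{j,i}(\lambda)=\mathbf 1\{\phi(Y_{j,i};\mathcal B_{j,-i})>\lambda\}$ and two auxiliary thresholds $\hat\lambda'\le\hat\lambda\le q$ (the infima at which the pooled loss, with and without a worst-case $\tfrac1{J+1}$ surrogate for the test batch, drops below $\alpha$), then uses within-batch exchangeability \emph{conditionally} on the calibration batches (tower property, since $\hat\lambda$ is calibration-measurable) to replace $L_{J+1,I}(\hat\lambda)$ by the batch average, and finally across-batch exchangeability plus the definition of $\hat\lambda'$ to bound the expectation by $\alpha$. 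You instead run a leave-one-batch-out symmetrization in the style of full-conformal/CV+ arguments: the identity $\Pr\{R_{J+1,I}>q\}=\mathbb E\bigl[\tfrac1{(J+1)I}\sum_{j,i}\mathbf 1\{R_{j,i}>q_{-j}\}\bigr]$, justified by the two invariances you isolate (batch relabeling and within-batch slot relabeling, which indeed commute with the functional because $q_{-j}$ depends only on the pooled multiset excluding batch $j$), followed by a purely deterministic counting bound $\sum_{j,i}\mathbf 1\{R_{j,i}>q_{-j}\}\le \lfloor\delta_J I\rfloor+I\le (J{+}1)\alpha I$. Your counting step is sound, though the phrase ``every counted occurrence lies among the top $\lfloor\delta_J I\rfloor+I$ occurrences'' is cleanest if argued via the smallest counted value $R^*$ (all counted occurrences have value $\ge R^*$, and at most $\lfloor\delta_J I\rfloor+I$ occurrences of the full pool do), and it implicitly fixes the order-statistic convention for the empirical $(1-\delta_J/J)$-quantile, which the paper's algorithm leaves slightly informal. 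What each approach buys: yours is more elementary and self-contained, and makes transparent why the quantile level $1-\delta_J/J$ and the ``$+I$'' from deleting the test batch produce exactly $(J{+}1)\alpha$; the paper's threshold-comparison argument avoids multiset/tie bookkeeping and transfers verbatim to Theorem~\ref{thm:bbucp} (replace residuals by bootstrap replicates $S_{j,k}$) and, more generally, to other bounded monotone losses in the CRC spirit, whereas your counting argument would need to be redone for the bootstrapped pool.
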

\begin{restatable}[BB‑UCP coverage]{theorem}{bbucp}
\label{thm:bbucp}
The prediction set $C_n$ returned by Batch Bootstrap U‑CP satisfies
\(
  \Pr\{Y_{n+1}\in C_n\}\ge 1-\alpha.
\)
\end{restatable}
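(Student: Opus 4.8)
\textbf{Proof proposal for \cref{thm:bbucp}.}

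The plan is to reduce the statement to a symmetrization over the $J+1$ i.i.d.\ batch-level units, after recognizing that the honest leave-one-out test residual can be treated, in law, as a single bootstrap resample of the test batch. Write $\mathcal{M}_j=\{R_{j,1},\dots,R_{j,I}\}$ for the multiset of within-batch leave-one-out residuals of batch $j$, and $F_{\mathcal{M}_j}$ for its empirical CDF. Two structural facts drive everything. First, by batch exchangeability (the $\mathcal{B}_j$ are i.i.d.)\ and permutation-invariance of $\phi$, the multisets $\mathcal{M}_1,\dots,\mathcal{M}_{J+1}$ are i.i.d.; moreover, since $i\mapsto R_{j,i}$ is an equivariant image of the within-batch responses and these are exchangeable, $(R_{j,1},\dots,R_{j,I})$ is exchangeable, so $R_{j,I}\mid\mathcal{M}_j\sim\mathrm{Unif}(\mathcal{M}_j)$. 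In particular $R_{n+1}:=\phi(Y_{n+1};\mathcal{B}_{J+1,-I})=R_{J+1,I}$ is, conditionally on $\mathcal{M}_{J+1}$, uniform on $\mathcal{M}_{J+1}$ — exactly the conditional law of one bootstrap draw. Second, for $j\le J$ the resamples $(S_{j,k})_{k=1}^{K}$ are i.i.d.\ $\mathrm{Unif}(\mathcal{M}_j)$ given $\mathcal{M}_j$, drawn independently across $j$; I adjoin a \emph{fictitious} test resample $(S_{J+1,k})_{k=1}^{K}$ i.i.d.\ $\mathrm{Unif}(\mathcal{M}_{J+1})$ given $\mathcal{M}_{J+1}$, conditionally independent of all else, so that $Z_j:=(\mathcal{M}_j,(S_{j,k})_k)$, $j=1,\dots,J+1$, are i.i.d.

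The trivial case first: if $\delta_J\le0$ the algorithm returns $q=B_\phi\ge\phi(\cdot)$, so $Y_{n+1}\in C_n$ surely. Assume $\delta_J>0$, whence $\beta:=1-\delta_J/J=(J+1)(1-\alpha)/J\in(0,1)$ and $q=Q_\beta(\mathcal{D})$ with $\mathcal{D}=\biguplus_{j\le J}\{S_{j,k}\}_k$; I use the convention $Q_\gamma(\mathcal{A})=\inf\{t:F_{\mathcal{A}}(t)\ge\gamma\}$ to sidestep ties, so $F_{\mathcal{A}}(Q_\gamma(\mathcal{A}))\ge\gamma$ always. I would then establish three links. (a) $\Pr\{Y_{n+1}\in C_n\}=\Pr\{R_{n+1}\le q\}=\E[F_{\mathcal{M}_{J+1}}(q)]=\E[\widehat F_{J+1}(q)]$, where $\widehat F_{J+1}$ is the empirical CDF of the fictitious test resample: the first CDF identity follows by conditioning on the multisets together with the calibration resamples (which renders $q$ measurable while $R_{n+1}$ stays conditionally $\mathrm{Unif}(\mathcal{M}_{J+1})$ and independent of them), and the second because $q$ is independent of $(S_{J+1,k})_k$ given $\mathcal{M}_{J+1}$, so each $\1\{S_{J+1,k}\le q\}$ has conditional mean $F_{\mathcal{M}_{J+1}}(q)$. (b) With $\mathcal{D}^{+}=\mathcal{D}\cup\{S_{J+1,k}\}_k$ and $q^{+}=Q_{1-\alpha}(\mathcal{D}^{+})$, one has $q\ge q^{+}$ surely, since $F_{\mathcal{D}^{+}}(q)=\frac{J F_{\mathcal{D}}(q)+\widehat F_{J+1}(q)}{J+1}\ge\frac{J\beta}{J+1}=1-\alpha$ and $q^{+}$ is the least point where $F_{\mathcal{D}^{+}}$ reaches $1-\alpha$. (c) $\E[\widehat F_{J+1}(q^{+})]\ge1-\alpha$: because $q^{+}$ is a symmetric function of $(Z_1,\dots,Z_{J+1})$ and this tuple is i.i.d., the vector $(\widehat F_1(q^{+}),\dots,\widehat F_{J+1}(q^{+}))$ is exchangeable, so all its coordinates share a common mean $\bar p$, and $(J+1)\bar p=\E[(J+1)F_{\mathcal{D}^{+}}(q^{+})]\ge(J+1)(1-\alpha)$.

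Chaining (a), (b) (with monotonicity of $\widehat F_{J+1}$), and (c) gives $\Pr\{Y_{n+1}\in C_n\}=\E[\widehat F_{J+1}(q)]\ge\E[\widehat F_{J+1}(q^{+})]\ge1-\alpha$. The proof of \cref{thm:bucp} is the same argument with the per-batch bootstrap replaced by the batch's full residual multiset (so $S_{j,k}\leftrightarrow R_{j,i}$, $K\leftrightarrow I$), and no fictitious resample is needed there since $R_{n+1}$ is literally one of batch $J+1$'s residuals.

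I expect the main obstacle to be link (a): making precise that the genuine leave-one-out test residual may be identified, in distribution and jointly with everything else, with one draw of a fictitious test bootstrap, without disturbing the dependence structure — so that the $Z_j$ remain i.i.d.\ and $q$ remains independent of the test resample. A secondary point needing care is the exchangeability assertion in (c): that evaluating each per-batch resample CDF at the pooled, symmetrically-defined quantile $q^{+}$ preserves exchangeability across $j$; this is the standard ``symmetric statistic of an exchangeable sequence'' fact but must be phrased for multisets/CDFs rather than i.i.d.\ scalars. The exact role of the inflation factor $\beta=(J+1)(1-\alpha)/J$ is pinned down entirely by the surely-true inequality $F_{\mathcal{D}^{+}}(q)\ge1-\alpha$ in (b), which is where the $+1$ sample-size correction of UCP re-enters at the batch level.
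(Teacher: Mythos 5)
Your proposal is correct and is essentially the paper's own argument in different notation: your fictitious test-batch resample is the paper's ``virtual bootstrap'' of $\mathcal{B}_{J+1}$, your pooled augmented quantile $q^{+}$ coincides with the paper's $\hat\lambda'$ (your deterministic step (b) playing the role of its intermediate $\hat\lambda\le q$), and your step (c) is the paper's symmetrization of the exceedance average over the $J+1$ exchangeable batch units. The points you flag as delicate—conditional uniformity of the test leave-one-out residual over the batch's residual multiset, and symmetry of the pooled threshold in the i.i.d.\ batch-level units—are precisely the facts the paper's proof also relies on (and states only implicitly), so no genuinely different route is involved.
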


Our design incorporates three main mechanisms.
\begin{enumerate}[label=\arabic*)]
\item \textit{Batching.} In the unsupervised setting, conventional CP loses the easy exchangeability of supervised CP because the conformity score depends on the other responses. By organizing data into exchangeable batches and using \textit{within-batch leave-one-out} residuals (each \(R_{j,i}\) computed against a base of size \(I{-}1\)), we effectively re-enable cross-validation–style conformalization.
\item \textit{Within-batch LOO under batch exchangeability.} This alignment makes the calibration and test residual laws match, removing the split-sample penalty in split-UCP (which inflates the order-statistic index) and yielding tighter thresholds at a fixed risk level.
\item \textit{Bootstrap aggregation.} Averaging replicated empirical laws within each batch stabilizes the empirical quantile and down-weights idiosyncratic batches, reducing the chance of underestimating the target quantile. Realized coverage therefore tends to be slightly conservative while intervals remain short. Resampling \(\{R_{j,i}\}\) is inexpensive and preserves exchangeability; and because the method is rank-based, any strictly non-decreasing transform of \(\phi\) leaves \(C_n\) unchanged. These effects anticipate our observations: BB-UCP is typically more conservative than split-UCP yet produces tighter, more stable intervals.
\end{enumerate}

\subsection{Conformal Alignment }
\label{subsec:alignment}

Conformal alignment functions as a quality control technique across various modalities, enabling multilevel filtering and alignment superior to standard UCP schemes. Initially, we parameterize strictness using a single knob $\tau\!\in\![0,1]$ to filter batches with a low-cost, consistently available signal (here, the Gram matrix inner energy). During calibration, this signal is aligned with a rare or expensive quality measure (e.g., factuality), enabling deployment with just the low-cost signal while retaining the calibrated assurance $\Pr(\text{predicate on future batch})\!\ge\!1-\alpha$. 
The idea scales to various contexts: establishing an accessible, cost-effective score allows the use of the same method to determine a global $\hat\tau$ from past batches, which is then applied to unlabeled new data. By utilizing text and Gram scores, the predicate can adapt any non-decreasing, right‑continuous batch metric to inexpensive scores derived from text, vision, audio, or multimodal embeddings, offering substantial flexibility and a wide range of applications.

Similar to \S\ref{subsec:bbucp}, partition the data into \(J\) disjoint batches
\(\mathcal{B}_j=\{Y_{j,1},\dots,Y_{j,I}\}\) for \(j=1,\dots,J\), with
\(\{\mathcal{B}_j\}_{j=1}^{J+1}\) exchangeable and \(\mathcal{B}_{J+1}\) the future batch.
Let \(\mathcal{RC}\) be the space of right‑continuous, non‑decreasing maps \([0,1]\!\to\![0,1]\), and
define \(\psi:\binom{\mathcal{Y}}{I}\!\to\!\mathcal{RC}\). For each batch $j$, set \(\mathcal{P}_j(\cdot)=\psi(\mathcal{B}_j)\);
then \(\mathcal{P}_j\) is non-decreasing and right‑continuous, and \(\{\mathcal{P}_j\}\) is exchangeable.

We use \(\mathcal{P}_j(\tau)\) as a batch predicate with a subset‑selection parameter \(\tau\in[0,1]\), which describes the $j$-th batch. For instance, let \(\widehat{J}_j(\tau)\subseteq\{1,\dots,I\}\) be right‑continuous filtered sets; that is, for any \(0\le\tau<\tau'\le1\) there
exists \(\delta>0\) with \(\widehat J_j(\tau')\subset\widehat{J}_j(\tau)=\widehat{J}_j(\tau+\delta)\). From the set $\widehat J_j(\tau)$ of indices, we define $\mathcal{P}_j(\tau)$ as the indicator of the event ``\(\widehat{J}_j(\tau)\) satisfies property A'' where ``property A'' is to be determined according to a specific prediction target. We search for
\(\widehat{\tau}\) such that \(\mathcal{P}_{J+1}(\widehat{\tau})=1\) with high probability, which means that the selected set $\widehat J_{J+1}(\hat \tau)$ satisfies ``property A'' with high probability.

Define the minimal passing strictness 
\begin{equation}
    \label{eq:mps}
    S_j\coloneqq\min\{\tau\in[0,1]\colon \mathcal{P}_j(\tau)=1\}\in[0,1],
\end{equation}
with \(\inf\varnothing=1\).
Let \(K=\lceil(1-\alpha)(J+1)\rceil\) and calibrate \(\widehat{\tau}\) as the \(K\)-th order statistic of
\(\{S_j\}_{j=1}^{J}\).

\begin{algorithm}[H]
  \caption{Batch U-CP Conformal Alignment}
  \label{alg:bucp-align}
  \begin{algorithmic}[1]
    \State \textbf{Input:} calibration batches \(\mathcal{B}_1,\dots,\mathcal{B}_J\); test batch \(\mathcal{B}_{J+1,-I}\); \(K\gets \left\lceil(1-\alpha)(J+1)\right\rceil\); function $\psi$; tolerance \(\alpha\in(0,1)\).
    \For{\(j=1\) \textbf{to} \(J\)}
      \State Compute $\mathcal{P}_j(\cdot)=\psi(\mathcal{B}_j)$
      \State Compute \(S_j=\inf\{\tau \in [0,1]\colon\mathcal{P}_j(\tau)=1\}\).
    \EndFor

    \State \textbf{Calibrate } \(\widehat{\tau}\gets\) the \(K\)-th smallest value among \(\{S_j\}_{j=1}^{J}\). (If $K=J+1$, then $\hat \tau\gets 1$)
           \hfill{\small (Split-conformal quantile with \(J{+}1\) total batches)}
    \State \textbf{Output} $\hat \tau$.
  \end{algorithmic}
\end{algorithm}

A remark of Algorithm \ref{alg:bucp-align} is that, for a target threshold \(r\in(0,1)\) which we want $\mathcal{P}_{J+1}(\hat \tau)\ge r$ to hold for high probability, define \(\mathcal{P}'_j(\tau)=\mathbf{1}\{\mathcal{P}_j(\tau)\ge r\}\).
Then \(\mathcal{P}'_j\) is also non-decreasing and right‑continuous, and
\(\mathcal{P}_j(\tau)\ge r\iff\mathcal{P}'_j(\tau)=1\). Hence
Algorithm~\ref{alg:bucp-align} records \(\{S_j'\}\) and returns a single \(\widehat{\tau}\) with
\(\Pr\{\mathcal{P}_{J+1}(\hat \tau)\ge r\}=\Pr\{\mathcal{P}'_{J+1}(\widehat{\tau})=1\}\ge 1-\alpha\) under exchangeability.

\begin{restatable}[B-UCP alignment guarantee]{theorem}{bucpalign}
\label{thm:bucp-align}
Assume the batches $\{B_j\}_{j=1}^{J+1}$ are exchangeable (which implies that predicates \(\{\mathcal{P}_j\}_{j=1}^{J+1}\) are exchangeable) and that each \(\mathcal{P}_j(\cdot)\) is non-decreasing and right-continuous in its argument \(\tau\) for \(j=1,\dots,J{+}1\).
Then Algorithm~\ref{alg:bucp-align} satisfies
\[
\Pr\!\left\{\ \mathcal{P}_{J+1}\bigl(\widehat{\tau}\bigr)=1\ \right\}\ \ge\ 1-\alpha.
\]
\end{restatable}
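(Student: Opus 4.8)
The plan is to push the analysis onto the scalar ``minimal passing strictness'' variables $S_1,\dots,S_{J+1}$ and then apply the classical conformal order-statistic inequality with the calibrated rank $K=\lceil(1-\alpha)(J+1)\rceil$.

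\emph{Step 1 (reduce the predicate event to an order event).} Fix $j$. Since $\mathcal{P}_j(\cdot)$ is non-decreasing and right-continuous and $S_j=\inf\{\tau\in[0,1]:\mathcal{P}_j(\tau)=1\}$, whenever this infimum is over a nonempty set it is attained by right-continuity, so $\mathcal{P}_j(S_j)=1$ and monotonicity yields $\mathcal{P}_j(\tau)=1$ for every $\tau\ge S_j$; conversely $\mathcal{P}_j(\tau)=1$ implies $\tau\ge S_j$. Under the normalization $\mathcal{P}_j(1)=1$ (maximal strictness always passes), which the construction appears to intend --- note the algorithm's rule $\widehat\tau\gets 1$ when $K=J+1$ --- this gives $\{\mathcal{P}_{J+1}(\widehat\tau)=1\}=\{S_{J+1}\le\widehat\tau\}$, with no exception at the boundary $S_{J+1}=1$. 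It thus suffices to prove $\Pr\{S_{J+1}\le\widehat\tau\}\ge 1-\alpha$.

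\emph{Step 2 (exchangeability and the rank bound).} Each $S_j$ is the same deterministic measurable functional of its batch, $S_j=s(\psi(\mathcal{B}_j))$ with $s(\mathcal{P})=\inf\{\tau:\mathcal{P}(\tau)=1\}$, so exchangeability of $(\mathcal{B}_1,\dots,\mathcal{B}_{J+1})$ transfers to $(S_1,\dots,S_{J+1})$. Now $\widehat\tau$ is the $K$-th smallest of $S_1,\dots,S_J$. Break ties uniformly at random; by exchangeability the resulting rank of $S_{J+1}$ in $\{S_1,\dots,S_{J+1}\}$ is uniform on $\{1,\dots,J+1\}$. If that rank is $\le K$, then at most $K-1$ of $S_1,\dots,S_J$ lie strictly below $S_{J+1}$, which forces $S_{J+1}\le\widehat\tau$ (recall $\widehat\tau$ is the $K$-th smallest of $S_1,\dots,S_J$). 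Hence
\[
\Pr\{S_{J+1}\le\widehat\tau\}\ \ge\ \Pr\{\mathrm{rank}(S_{J+1})\le K\}\ =\ \frac{K}{J+1}\ \ge\ 1-\alpha,
\]
the last step since $K=\lceil(1-\alpha)(J+1)\rceil\ge(1-\alpha)(J+1)$; with Step 1 this proves the theorem. The degenerate case $K=J+1$ (equivalently $\alpha<\tfrac1{J+1}$), where no ``$K$-th of $J$'' order statistic exists, is handled by the explicit rule $\widehat\tau\gets 1$: then $S_{J+1}\le 1=\widehat\tau$ and $\mathcal{P}_{J+1}(\widehat\tau)=1$ hold deterministically.

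\emph{Anticipated obstacle.} The rank inequality is routine; the only genuinely delicate point is Step 1 at the endpoint $\tau=1$. Because of the convention $\inf\varnothing=1$, a batch whose predicate is never satisfied still gets $S_j=1$ although $\mathcal{P}_j(1)=0$, so without a normalization the identity $\{\mathcal{P}_{J+1}(\widehat\tau)=1\}=\{S_{J+1}\le\widehat\tau\}$ can fail precisely on $\{\mathcal{P}_{J+1}\equiv 0\}$. I would address this by stating the normalization $\mathcal{P}_j(1)=1$ explicitly at the outset (one may always redefine $\mathcal{P}_j(1):=1$ without disturbing monotonicity or right-continuity), after which $\{S_{J+1}\le\widehat\tau\}\subseteq\{\mathcal{P}_{J+1}(\widehat\tau)=1\}$ holds without exception and Step 2 carries the argument through. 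A minor secondary care is the tie handling in Step 2, which the uniform-tie-break device makes clean.
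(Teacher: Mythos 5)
Your proposal is correct and follows essentially the same route as the paper: reduce $\{\mathcal{P}_{J+1}(\widehat\tau)=1\}$ to the order event $\{S_{J+1}\le\widehat\tau\}$, invoke exchangeability of $\{S_j\}_{j=1}^{J+1}$ inherited from the batches, and bound the probability by $K/(J+1)\ge 1-\alpha$ via the rank/augmented-order-statistic argument (your rank formulation and the paper's comparison with $\widehat\tau_{J+1}$ are the same standard step), with your explicit handling of ties and of $K=J+1$ being slightly more careful than the paper's. Your endpoint caveat is also well taken: under the convention $\inf\varnothing=1$ the identity $\{\mathcal{P}_{J+1}(\widehat\tau)=1\}=\{S_{J+1}\le\widehat\tau\}$, which the paper asserts without comment, can fail on $\{\mathcal{P}_{J+1}\equiv 0\}$, so the normalization $\mathcal{P}_j(1)=1$ (or an equivalent assumption) that you add is genuinely needed for the stated guarantee.
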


By Theorem~\ref{thm:bucp-align}, any non-decreasing, right‑continuous batch predicate calibrated across exchangeable batches yields a $1-\alpha$ guarantee on the held‑out batch. We now instantiate this scheme for black‑box LLMs by selecting a cheap Gram‑geometry self‑consistency score $Q$ with the induced filter $\widehat J_j(\tau)$ and a batch predicate $\mathcal{P}_j$ that encodes deployment goals (e.g., factuality lift), as specified next.

Let $Q_{j,i}$ denote the inner‑product interaction energy $e(i;G)$ from Section~\ref{subsec:gram-inner-energy} (unit‑norm, cosine geometry). We keep high‑consensus items via
\[
\widehat J_j(\tau)\ \coloneqq\ \{\,i:\ Q_{j,i}>\tau\,\},
\]
so larger $\tau$ filters out more responses.

Let $s_{j,i}\!\in\![0,1]$ be a batch severity with larger = worse (e.g., factuality severity). In strictness $\tau$, set $K_j(\tau)=\{i:Q_{j,i}>\tau\}$ and $D_j(\tau)=\{i:Q_{j,i}\le\tau\}$, and declare pass when the indicator is evaluated as 1, where
\[
\mathcal{P}^{\mathrm{CVAR}}_j(\tau)\ :=\ \mathbf{1}\!\Big\{\underbrace{\mathrm{CVAR}_q(s_{j,i}:i\!\in\! D_j(\tau))-\mathrm{CVAR}_q(s_{j,i}:i\!\in\! K_j(\tau))}_{\Delta\mathrm{CVAR}_{j,\tau}(q)}\ \ge\ \delta\Big\}.
\]
CVAR focuses on the worst tail. Requiring a positive gap means that the kept set reduces severe errors (rare but damaging hallucinations) relative to the dropped set \citep{chow2015risksensitiverobustdecisionmakingcvar,zhao2025rapbrlprovablyefficientriskaware,RockafellarUryasev2000CVaR,AcerbiTasche2002ESCoherent}. Because $K_j(\tau)$ enlarges as $\tau$ decreases, $\tau\mapsto \mathcal{P}^{\mathrm{CVAR}}_j(\tau)$ is non-decreasing/right‑continuous, so Alg.~\ref{alg:bucp-align} applies unchanged.

Let $s_{j,i}$ be a factuality severity in $[0,1]$ (lower is better; e.g., BERTScore–F1 dissimilarity). The predicate $\mathcal{P}^{\mathrm{F}}_j(\tau)=1$ asserts that the $Q$‑filtered subset attains a statistically significant median reduction in factuality severity (per the test above). Calibrating $\hat\tau$ across historical batches yields a single label‑free gate which, using $Q$ alone at deployment, preserves this improvement on new batches with probability at least $1-\alpha$ (Theorem~\ref{thm:bucp-align}). \textit{Applications include:} (i) open-domain QA/RAG: ensure only consistent answers are shown; (ii) customer support and search snippets: reduce the risk of false confident statements; (iii) summarization/reporting: exclude sections that do not pass factual accuracy checks before publishing.

\section{Experiments}
\label{sec:exp}

We perform three experiments to study three research questions aligned with §3: \emph{(RQ1)} within a single query, does BB-UCP produce tighter prediction sets than S-UCP at the same target miscoverage? \emph{(RQ2)} across multiple queries, do B-/BB-UCP achieve nominal coverage and improve batch quality by discarding higher-severity responses? \emph{(RQ3)} in cross-query alignment, can the calibrated global strictness $\hat\tau$ reliably yield batch-level severity reduction while preserving the conformal guarantee?

\subsection{Experimental setup}
\label{sec:exp:setup}

We evaluate across four complementary QA datasets, each exhibiting a distinct failure mode:
\emph{ASQA}—ambiguity and underspecification \citep{stelmakh2023asqafactoidquestionsmeet};
\emph{NQ-Open}—single‑hop factoid retrieval \citep{lee2019latentretrievalweaklysupervised,kwiatkowski-etal-2019-natural};
\emph{HotpotQA}—multi‑hop composition \citep{yang2018hotpotqadatasetdiverseexplainable};
and \emph{AmbigQA}—aliases and answer sets \citep{min2020ambigqaansweringambiguousopendomain}.
To probe sensitivity, we add two ablations: a decoding‑entropy stress test and a vendor/model swap. For each open‑domain QA prompt, we (i) synthesize a diverse response set by mixing plain answers, lightly \emph{enforced} canonical answers, and structured \emph{noise} outliers. This controlled injection is standard for stress test hallucination detection and semantic dispersion UQ signals \citep{kuhn2023semanticuncertaintylinguisticinvariances,qiu2024semanticdensityuncertaintyquantification} and allows us to probe robustness under realistic contamination.
All texts are embedded with a lightweight sentence encoder (\texttt{all-MiniLM-L6-v2}). We stack unit‑normalized embedding vectors by rows to form $V$ and then the Gram matrix $G=VV^\top$; (ii) expand the reference set with concise paraphrases to reduce aliasing; and (iii) characterize each candidate using a distance-based metric, \textbf{Factuality Severity} (FS). All artifacts are logged and kept provider‑agnostic across OpenAI, Together, and Gemini \citep{openai2024gpt4ocard,grattafiori2024llama3herdmodels,geminiteam2024gemini15unlockingmultimodal}.

We quantify answer quality with respect to references using BERTScore–F1 with baseline rescaling (\texttt{roberta-large}) \citep{zhang2020bertscoreevaluatingtextgeneration} on the \emph{answer head} (first sentence or a \texttt{Final:} field, truncated to $\le 16$ tokens).
Letting $\mathrm{head}(a)$ be the head of answer $a$ and $\mathcal{R}_q$ the reference set, we define the severity
\begin{equation}
  \mathrm{FS}(a)\ \coloneqq \ 1 - \max_{r\in\mathcal{R}_q}\,\mathrm{BERTScoreF1}\!\big(\mathrm{head}(a),\,r\big)\ \in[0,1],
\end{equation}
so that a value of $0$ indicates a near‑paraphrase of some reference (high factual alignment), and a value near $1$ flags semantic deviation. Scoring the head avoids rationale contamination and normalizes across style/length.

Given a batch $B$ and the kept subset $K(\tau)$ after filtering by $Q$, we summarize factuality lifting by the \emph{median reduction} 
\[\Delta_{\text{FS
}}(\tau)=\mathrm{median}\{FS(y):y\in B\}-\mathrm{median}\{FS(y):y\in K(\tau)\}\] (larger is better). All conformal calibrations use the Gram inner–energy score $e$ from §3 and are implemented exactly as specified (\emph{S‑/B‑/BB‑UCP} and alignment). We fix random seeds, cache embeddings/Grams, and run identical pipelines on all datasets.

\subsection{Experiment I — Single‑Query Conformal Calibration (S‑UCP vs.\ BB‑UCP)}
\label{sec:exp:single}

\begin{figure}[t]
  \centering
  \includegraphics[width=\linewidth]{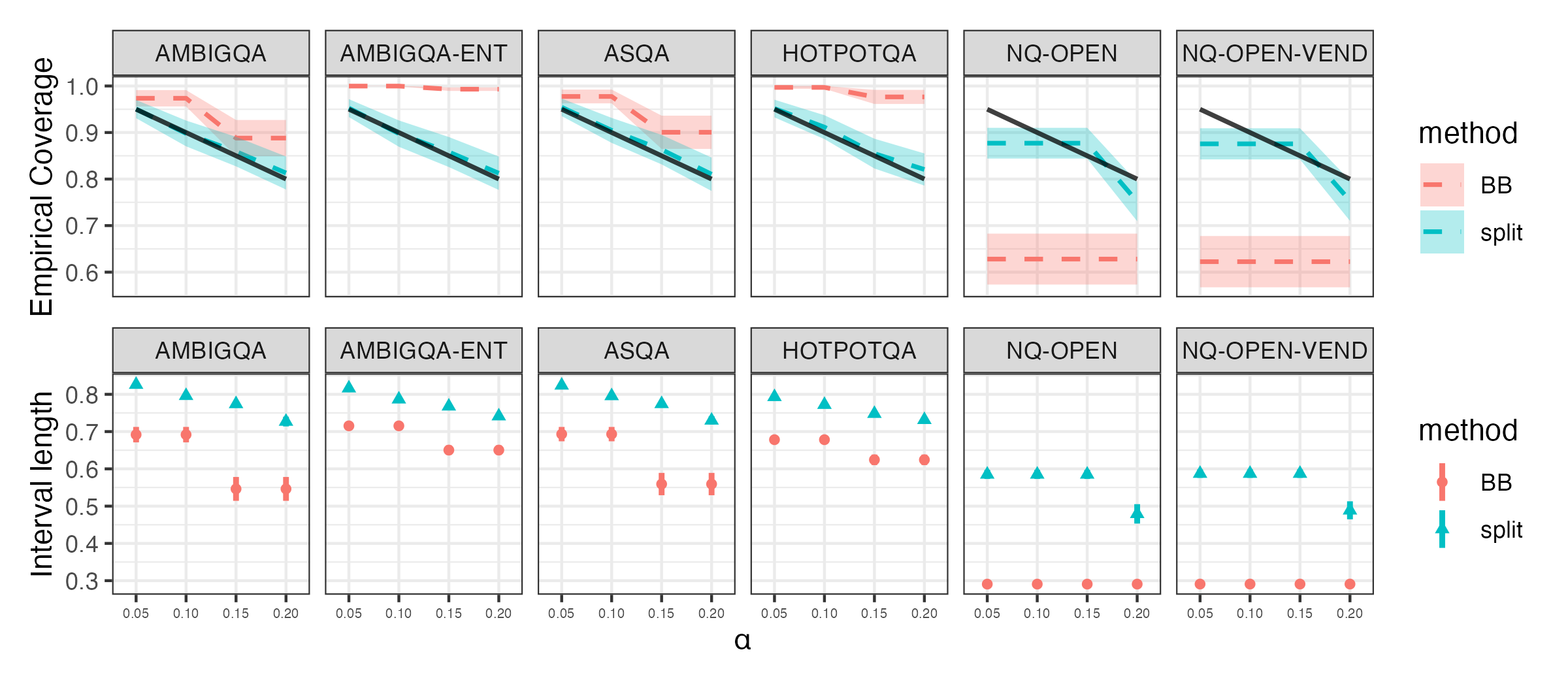}
  \caption{\textbf{Experiment 1.} \textbf{Top row:} Empirical coverage vs.\ $\alpha$ for a representative dataset (ASQA). Shaded bands show $\pm$1 SE, and the black line is the $1-\alpha$ target. Both the BB and Split UCP achieve the desired theoretical assurance, consistently staying above the target. BB is typically more conservative to attain greater empirical coverage. \textbf{Bottom row:} Interval size comparison bar are plotted across datasets, grouped by method (BB, split) and stratified by $\alpha\in\{0.05,0.10,0.15,0.20\}$. In \textit{all} datasets and at \textit{all} $\alpha$ levels, BB consistently produces smaller interval sizes than Split, indicating the bootstrapping achieves statistical efficiency as intended.}
  \label{figs:exp1-coverage-interval}
\end{figure}

In the single‑query regime, for each question $q$ we embed all candidate responses, form the unit‑norm response Gram matrix $G$, and compute the inner‑product energy score $Q$ for each response. We then construct residuals within the same pool and repeatedly split into calibration/test subsets of fixed sizes. Split UCP thresholds test residuals by directly taking a quantile of residuals; BB‑UCP additionally bootstraps the calibration residuals within the  batch and aggregates the quantiles to stabilize the threshold. We report (i) empirical coverage against the $1-\alpha$ target and (ii) an efficiency proxy given by the accepted sublevel endpoint $q_{1-\alpha}$ (smaller is better).

Across \emph{AmbigQA}, \emph{AmbigQA‑ENT}, \emph{ASQA}, and \emph{HotpotQA}, both S‑UCP and BB‑UCP achieve near‑nominal coverage, while BB‑UCP consistently achieves more conservative empirical coverage across repetitions and $\alpha$ and yields shorter interval length; see Fig.~\ref{figs:exp1-coverage-interval} (top/bottom). On \emph{NQ‑Open} and \emph{NQ‑Open‑Vend}, performance is weaker: the answer pools are small and low‑diversity, often collapsing into a single high‑consensus “heap.” This compresses dispersion in Gram space, produces near‑tied residual ranks, and blunts the bootstrap’s advantage; Split UCP 's coverage remains close to target. Slightly enlarging pool size or boosting response diversity effectively restores similar qualitative improvements as seen in other datasets.

\subsection{Experiment II — Cross‑Query Calibration (BB‑UCP) and Factuality Lifting}
\label{sec:exp:cross}

\begin{figure}[!t]
  \centering
  \includegraphics[width=\linewidth]{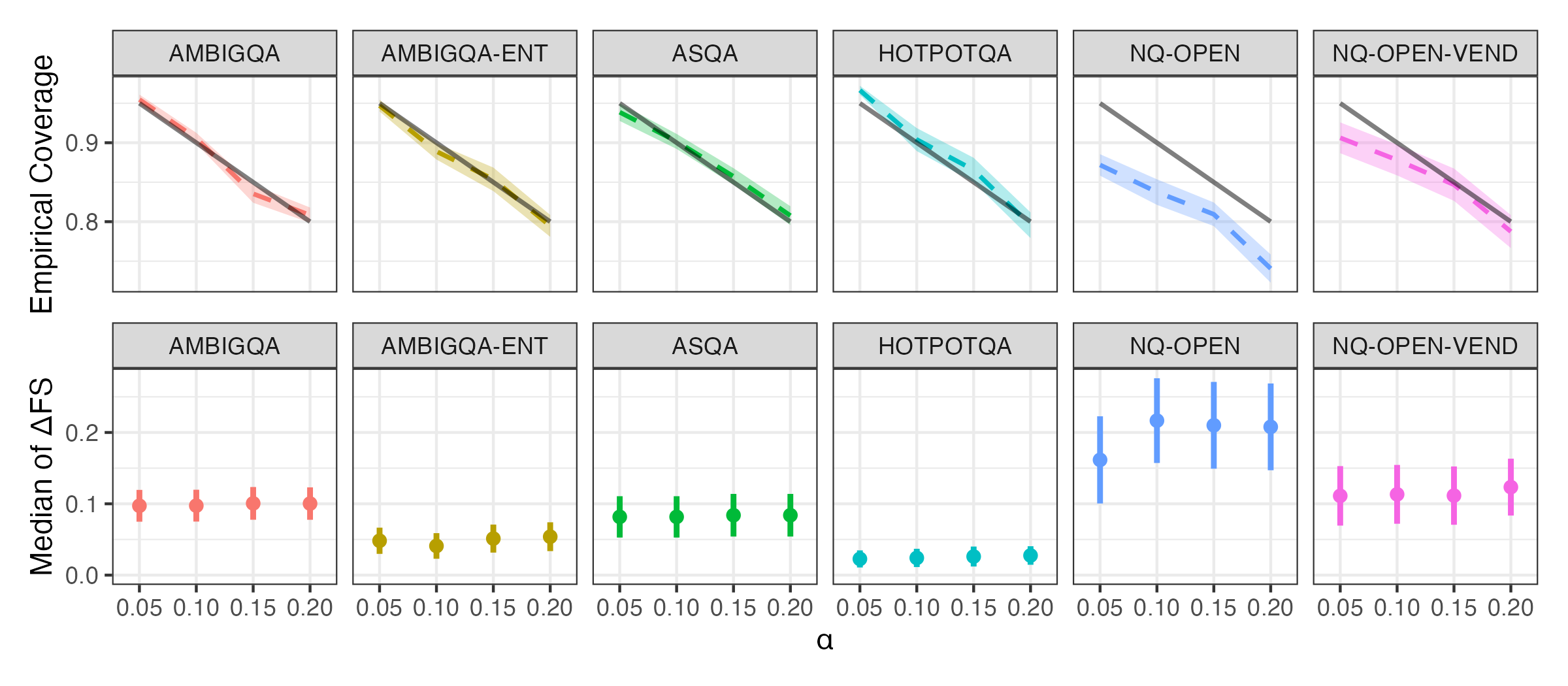}
  \caption{\textbf{Experiment 2.} \textbf{Top row:} Empirical coverage vs. $\alpha$. Each panel corresponds to one dataset. Shaded bands show $\pm$ SE, and the black line is the $1-\alpha$ target. Across all datasets, cross-query BB-UCP nearly achieves the expected theoretical guarantee, with a single failure occurring solely in an extreme stress test. \textbf{Bottom row:} The change in median factuality severity ($\Delta$FS) is shown across datasets (points indicate the mean, intervals represent $\pm$ SE). For every dataset, $\Delta$FS constantly stay above 0, highlighting the efficacy of BB-UCP in improving factuality.}
  \label{fig:ucp-combined}
\end{figure}

To mimic real-world deployment, each response set of the same query forms a \emph{batch}. We run leave‑one‑query‑out (LOQO) cross‑validation: hold one batch for testing and calibrate on the rest. For every calibration batch we compute residuals from $Q$ under within‑batch LOO, bootstrap residuals per batch, pool the bootstrapped $(1-\alpha)$‑quantiles into a single threshold $q^{\mathrm{BB}}_{1-\alpha}$, and apply that global threshold to the held‑out batch. We evaluate (i) empirical coverage vs.\ $1-\alpha$ and (ii) factuality lifting via the median reduction in factuality severity, $\Delta\mathrm{FS}$ (excluded minus kept; better positivity indicates improved prevention of hallucinations.).

LOQO empirical coverage closely tracks $1-\alpha$ across risk levels and datasets (Fig.~\ref{fig:ucp-combined}, top), demonstrating that a \emph{BB-UCP} global threshold learned from historical queries generalizes to unseen queries. More importantly for deployment, filtering by $q^{\mathrm{BB}}_{1-\alpha}$ consistently improves batch quality: $\Delta\mathrm{FS}>0$ across \emph{all} datasets and \emph{all} $\alpha$ (Fig.~\ref{fig:ucp-combined}, bottom;  The hardest panels are instructive: on \emph{NQ‑Open}, median $\Delta\mathrm{FS}\!\approx\!0.209$ (largest among benchmarks) despite average empirical coverage falling short of the $95\%$ target (88.98\% vs.\ 95.00\%); \emph{NQ‑Open‑Vend} shows the same trade‑off (median $\Delta\mathrm{FS}\!\approx\!0.112$, 92.88\% vs.\ 95.00\%) (Appendix~C, Table~4).
This under‑coverage is attributable to small‑$N$/low‑entropy pools that yield discretization effects and near‑ties in residual ranks. As in Experiment~I, standard operational tweaks—increasing per‑query pool size or adding response diversity—tighten coverage without erasing the observed factuality lift.

\subsection{Experiment III — Cross‑Query Conformal Alignment (CVaR‑gap)}
\label{sec:exp:align}

We perform LOQO folds for conformal alignment as well. In each fold, for every calibration batch $j$ we scan a strictness grid $\tau$ and evaluate the CVaR‑gap predicate $\mathcal{P}^{\mathrm{CVAR}}_j(\tau)$ at tail level $q$ and margin $\delta$, then record the minimal passing strictness $S_j$. We conformally calibrate a global $\hat\tau$ as the split‑batch $(1-\alpha)$‑quantile of $\{S_j\}$ (Alg.~\ref{alg:bucp-align}), then deploy on the held‑out batch using only the cheap score $Q$: keep $K(\hat\tau)=\{i:Q_i>\hat\tau\}$ and report (i) empirical pass rate against $1-\alpha$ and (ii) factuality improvement via $\Delta\mathrm{CVAR}(\hat\tau)$. Full predicate and implementation details are in Appendix~B.5/C.

Alignment preserves the statistical target while delivering consistent factuality gains: for every dataset and every risk level, the reduction in factuality severity on the kept set is positive on average (Fig.~\ref{fig:f1-analysis-all-alphas}). Notably, the largest median and mean improvements occur on the two hardest datasets—\emph{NQ‑Open} and \emph{NQ‑Open‑Vend}—with median $\Delta\mathrm{FS}\!\approx\!0.206$ and $\approx\!0.112$, respectively (Appendix~C, Table~5). Aligning the affordable Gram-geometry score with the factuality signal provides an effective, economical filtering method, ensuring conformal assurance for new batches.

\begin{figure}[H]
  \centering
  \includegraphics[width=\linewidth]{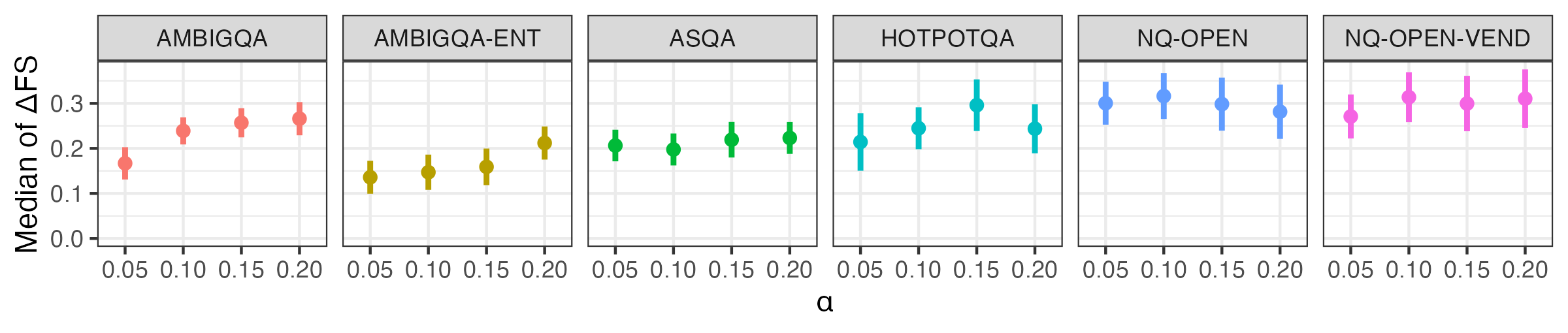}
  \caption{\textbf{Experiment 3.} Median of $\Delta$FS by dataset (points: mean, intervals: $\pm$ SE), and each panel corresponds to one dataset. Across \textit{every} dataset and at \textit{every} risk level, the reduction in factuality severity is consistently above 0 on average, demonstrating that conformal alignment with the gram matrix energy effectively enhances factuality while preserving statistical guarantees.}
  \label{fig:f1-analysis-all-alphas}
\end{figure}
\section{Conclusion}
\label{sec:conclusion}
We introduced an unsupervised conformal inference framework for black-box LLMs that operates entirely on sampled outputs. The framework comprises three deployable components: (i) a Gram-geometry \emph{atypicality} score based on unit-norm response-embedding inner products, yielding a bounded, interpretable, and stable signal; (ii) batched conformal procedures (B-UCP and the bootstrap-stabilized BB-UCP) that provide distribution-free, finite-sample guarantees under batch exchangeability while improving quantile stability and data efficiency over split UCP; and (iii) \emph{conformal alignment}, which calibrates a global strictness parameter $\hat{\tau}$ so that a batch predicate (e.g., a CVaR-gap factuality lift) holds on new batches with probability at least $1-\alpha$. Conformal alignment provides a principled way to synchronize an expensive signal (ground truth) with a cheaper proxy and performs well when deployment relies only on the proxy, yielding a probabilistic approach to multimodal signal gating and filtering.

\paragraph{Limitations and outlook.}
Our guarantees assume exchangeability across batches and within-batch permutation invariance; violations due to drift or covariate shift motivate weighted or covariate-aware variants. Performance depends on embedding quality and normalization, underscoring the need for robustness audits and principled model/embedding selection. Predicate design (e.g., CVaR vs.\ median lifts and multi-metric trade-offs) invites cost-aware utilities and multi-task calibration. Extending alignment to multimodal settings and adding adaptive or online recalibration are promising directions for stronger reliability under non-stationarity.

\paragraph{Use of AI for language editing.}
We used OpenAI ChatGPT and Overleaf Writefull solely for language polishing (grammar, clarity, style) of author-written text. All ideas, experiments, and conclusions are the authors’ own, and the authors reviewed and take responsibility for all content.

\bibliographystyle{abbrvnat}
\bibliography{iclr2026_conference}

\newpage
\appendix
\crefname{section}{Appendix}{Appendices}
\Crefname{section}{Appendix}{Appendices}

\crefname{section}{Appendix}{Appendices}
\Crefname{section}{Appendix}{Appendices}


\section{Proofs}
\label{app:proofs}
\subsection{Gram Matrix}
\unitnormbound*
\begin{proof}
$$e(i;G) := \|G_{:,i}\|_2
=\Big(\sum_{j=1}^n \langle v_i,v_j\rangle^2\Big)^{1/2} = \Big(\|v_i\|^2 + \sum_{j\neq i}^n \langle v_i,v_j\rangle^2\Big)^{1/2}$$
Since $\|v_i\|_2=1$, the $j=i$ term equals $1$, so
\[
    e(i;G)^2 
    = 1 + \sum_{j\neq i} \langle v_i,v_j\rangle^2.
\]
Now for $j\neq i$, because both $v_i$ and $v_j$ are unit vectors, $\langle v_i,v_j\rangle = \cos\theta_{ij}\in [0,1]$. 
Therefore,
\[
    1 \le e(i;G)^2 \le 1 + (n-1) = n,
\]
and taking square roots gives the bound
\[
    1 \;\le\; e(i;G) \;\le\; \sqrt{n}.
\]
The lower bound is attained when $v_i$ is orthogonal to all $v_j$'s with $j\neq i$; the upper bound is attained when $v_i$ is perfectly aligned or anti-aligned with all $v_j$'s.
\end{proof}

\subsection{Split UCP}
\label{app:proofs-split-ucp}

\begin{theorem}[Split‑UCP marginal coverage]
\label{thm:splitucp}
For exchangeable $\{Y_i\}_{i=1}^{n+1}$ and $\phi$,
the split conformal prediction set as stated in Section~\ref{sec:meth} satisfies
\(\Pr\{Y_{n+1}\in C_n\}\ge1-\alpha\).
\end{theorem}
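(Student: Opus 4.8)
The plan is to run the standard split-conformal argument: condition on the proper-training half $\mathcal{D}_1$, which freezes $y\mapsto\phi(y;\mathcal{D}_1)$ into a fixed measurable function, and then use exchangeability of the remaining responses to control the rank of the test residual among the calibration residuals.

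First I would observe that the partition of $\{1,\dots,n\}$ into $I_1$ and $I_2$ is chosen without reference to the data, so conditionally on $\mathcal{D}_1$ the family $\{Y_i:i\in I_2\}\cup\{Y_{n+1}\}$ is still exchangeable (conditioning on a subcollection of an exchangeable family leaves the complement exchangeable). Write $m=|I_2|$ and relabel these $m+1$ responses so that $Y_{n+1}$ is the last. Applying the fixed map $\phi(\cdot;\mathcal{D}_1)$ coordinatewise, the residuals $R_1,\dots,R_m,R_{n+1}$, with $R_{n+1}:=\phi(Y_{n+1};\mathcal{D}_1)$, are exchangeable, since images of an exchangeable family under one common measurable function are exchangeable. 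Next I would invoke the elementary quantile lemma: for exchangeable $R_1,\dots,R_{m+1}$, the rank of $R_{m+1}$ among all $m+1$ values is stochastically no larger than uniform on $\{1,\dots,m+1\}$, hence $\Pr\{R_{m+1}\le R_{(k)}\}\ge k/(m+1)$, where $R_{(k)}$ is the $k$-th smallest of $R_1,\dots,R_m$. Taking $k=\lceil(1-\alpha)(m+1)\rceil$ gives $k/(m+1)\ge 1-\alpha$, and by construction $q$ is exactly this order statistic, i.e.\ the $(1-\alpha)(1+1/m)$-empirical quantile of the calibration residuals (with the convention $q=B_\phi$, equivalently $q=+\infty$, when $(1-\alpha)(1+1/m)>1$, in which case coverage is trivial because $\phi\le B_\phi$).

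Since $\{Y_{n+1}\in C_n\}=\{\phi(Y_{n+1};\mathcal{D}_1)\le q\}=\{R_{n+1}\le R_{(k)}\}$ by definition of $C_n$, the lemma yields $\Pr\{Y_{n+1}\in C_n\mid\mathcal{D}_1\}\ge 1-\alpha$, and removing the conditioning by the tower property gives $\Pr\{Y_{n+1}\in C_n\}=\E\bigl[\Pr\{Y_{n+1}\in C_n\mid\mathcal{D}_1\}\bigr]\ge 1-\alpha$. The only delicate point — and the main (mild) obstacle — is tie-breaking: when residuals can coincide, the rank of $R_{n+1}$ is only sub-uniform, so I would state the lemma as the one-sided bound $\Pr\{R_{m+1}\le R_{(k)}\}\ge k/(m+1)$ and justify it by symmetrizing over the $(m+1)!$ orderings (or by noting that ties only enlarge the acceptance event $\{R_{n+1}\le R_{(k)}\}$), rather than asserting exact uniformity; the rest is routine bookkeeping about the ceiling and the boundary case.
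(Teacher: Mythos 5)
Your proposal is correct and follows essentially the same route as the paper's proof: condition on $\mathcal{D}_1$, use exchangeability of the residuals indexed by $I_2\cup\{n+1\}$, and bound the probability that $R_{n+1}$ falls below the $\lceil(1-\alpha)(|I_2|+1)\rceil$-th smallest calibration residual by $\lceil(1-\alpha)(|I_2|+1)\rceil/(|I_2|+1)\ge 1-\alpha$. Your explicit one-sided treatment of ties (sub-uniform rank rather than exact uniformity) is in fact slightly more careful than the paper's equality chain, which implicitly assumes distinct residuals; the conclusion is unaffected.
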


\begin{proof}
Let $R_{n+1}=\phi(Y_{n+1};\mathcal D_1)$. Given $\mathcal D_1$, the distribution of
$(R_i)_{i\in I_2\cup \{n+1\}}$ is exchangeable. Hence
\begin{equation*}
\begin{aligned}
\Pr\{Y_{n+1}\in C_n\}
&= \Pr\!\Big\{ R_{n+1}\ \le\ \text{ the } \lceil(1-\alpha)(|I_2|+1)\rceil \text{-th smallest of }\{R_i\}_{i\in I_2} \Big\}\\
&= \Pr\!\Big\{ R_{n+1}\ \le\ \text{ the } \lceil(1-\alpha)(|I_2|+1)\rceil \text{-th smallest of }\{R_i\}_{i\in I_2\cup\{n+1\}} \Big\}\\
&= \frac{\lceil(1-\alpha)(|I_2|+1)\rceil}{|I_2|+1}
\ \ge\ 1-\alpha.
\end{aligned}
\end{equation*}
\end{proof}

\subsection{Batch U-CP}
\label{app:proofs-bucp}

\bucp*

\begin{proof}
If $\tfrac{1}{J+1}\ge \alpha$, then $C_n$ is the whole space and the claim is trivial. Now assume $\tfrac{1}{J+1}<\alpha$.
Define the (random) indicator function
\[
L_{j,i}(\lambda)\coloneqq\mathbf 1\bigl\{\phi(Y_{j,i},\mathcal{B}_{j,-i})>\lambda\bigr\},
\qquad j=1,\dots,J{+}1,\ \ i=1,\dots,I.
\]
Since $Y_{J+1,I}$ is not used to construct $C_n$, we have
\[
\Pr(Y_{n+1}\in C_n)=\Pr\!\big(\phi(Y_{n+1},\mathcal{B}_{J+1,-I})\le q\big)=1-\mathbb E\!\left[L_{J+1,I}(q)\right],
\]
where $q$ is the $\bigl(1-\tfrac{(J+1)\alpha-1}{J}\bigr)$‑quantile of the set $\left\{\phi(Y_{j,i},\mathcal{B}_{j,-i})\right\}_{j=1{:}J,\,i=1{:}I}$.
Define
\[
\hat\lambda' \coloneqq \inf\!\left\{\lambda\colon\tfrac{1}{(J+1)I}\sum_{j=1}^{J+1}\sum_{i=1}^{I}L_{j,i}(\lambda)\le\alpha\right\},\quad
\hat\lambda \coloneqq  \inf\!\left\{\lambda\colon\tfrac{1}{(J+1)I}\sum_{j=1}^{J}\sum_{i=1}^{I}L_{j,i}(\lambda)+\tfrac{1}{J+1}\le\alpha\right\}.
\]

These two $\lambda$'s exist because $L_{j,i}(B_{\phi})=0$ for any $(i,j)$.

Then $\hat\lambda'\le\hat\lambda\le q$, so $L_{J+1,i}(\hat\lambda')\ge L_{J+1,i}(\hat\lambda)\ge L_{J+1,i}(q)$ for all $i$. 
Therefore
\begin{align*}
\mathbb{E}[L_{J+1,I}(q)]
&\le \mathbb{E}\left[L_{J+1,I}(\hat \lambda)\right]
= \mathbb{E}\!\left[\mathbb{E}\!\left[L_{J+1,I}(\hat \lambda)\Bigm|\{\mathcal{B}_j\}_{j=1}^{J}\right]\right]
= \mathbb{E}\!\left[\tfrac{1}{I}\sum_{i=1}^{I}L_{J+1,i}(\hat\lambda)\right]
\end{align*}
where the last equality uses exchangeability within batch $B_{J+1}$. Because $\hat\lambda'\le\hat\lambda$, the right-most term is bounded above by 
\begin{align*}
    \mathbb{E}\!\left[\tfrac{1}{I}\sum_{i=1}^{I}L_{J+1,i}(\hat\lambda')\right]
    = \mathbb{E}\!\left[\tfrac{1}{(J+1)I}\sum_{j=1}^{J+1}\sum_{i=1}^{I}L_{j,i}(\hat\lambda')\right]
    \le \mathbb{E}\!\left[\tfrac{1}{(J+1)I}\sum_{j=1}^{J+1}\sum_{i=1}^{I}L_{j,i}(\hat\lambda')\right]
    \le \alpha
\end{align*}
by also using exchangeability of the batches $\{\mathcal{B}_j\}_{j=1}^{J+1}$, the inequality $\hat\lambda\le q$, and then the definition of $q$.
Thus $\Pr(Y_{n+1}\in C_n)\ge 1-\alpha$.
\end{proof}

\subsection{Batch Bootstrap U-CP}
\label{app:proofs-bbucp}

\bbucp*

\begin{proof}
As with the proof in Section~\ref{app:proofs-bucp}, we have
\[
\Pr\bigl(Y_{n+1}\in C_n\bigr)=\Pr\bigl(\phi(Y_{n+1},\mathcal{B}_{J+1,-I})\le q\bigr)
= 1 - \mathbb{E}\bigl[\mathbf 1\{\phi(Y_{n+1},\mathcal{B}_{J+1,-I})> q\}\bigr].
\]
Also do virtual bootstrap in the future batch $\mathcal{B}_{J+1}$ to get $\{S_{J+1,k}\}_{k=1}^K$. Independence across $j$ and identical distributions imply
\[
\mathbb{E}\bigl[\mathbf 1\{\phi(Y_{n+1},\mathcal{B}_{J+1,-I})> q\}\bigr]
= \mathbb{E}\!\left[\tfrac{1}{K}\sum_{k=1}^K \mathbf 1\{S_{J+1,k}>q\}\right].
\]
Let
\[
\hat\lambda':=\inf\!\left\{\lambda:\frac{1}{(J+1)K}\sum_{j=1}^{J+1}\sum_{k=1}^{K}\mathbf 1\{S_{j,k}>\lambda\}\le \alpha\right\}
\]
\[
\hat\lambda:=B_\phi\wedge \inf\!\left\{\lambda:\frac{1}{(J+1)K}\sum_{j=1}^{J}\sum_{k=1}^{K}\mathbf 1\{S_{j,k}>\lambda\}+\frac{1}{J+1}\le \alpha\right\}.
\]
Then $\hat\lambda'\le \hat\lambda\le q$ and the same exchangeability argument yields
\[
\begin{aligned}
\mathbb{E}\bigl[\mathbf 1\{\phi(Y_{n+1},\mathcal{B}_{J+1,-I})> q\}\bigr]&=\mathbb{E}\!\left[\tfrac{1}{K}\sum_{k=1}^K \mathbf 1\{S_{J+1,k}>q\}\right]\\
&\le \mathbb{E}\!\left[\tfrac{1}{K}\sum_{k=1}^K \mathbf 1\{S_{J+1,k}>\hat{\lambda}'\}\right]\\
&=\mathbb{E}\!\left[\tfrac{1}{(J+1)K}\sum_{j=1}^{J+1}\sum_{k=1}^{K}\mathbf 1\{S_{j,k}>\hat\lambda '\}\right]\\
&\le \alpha,
\end{aligned}
\]
hence $\Pr(Y_{n+1}\in C_n)\ge 1-\alpha$.
\end{proof}

\subsection{Batch-wise Conformal Alignment}
\label{app:proofs-align}

\bucpalign*


\begin{proof}

The algorithm statement provides strictness $S_j:=\inf\{\tau:\mathcal P_j(\tau)=1\}$ with $\inf \varnothing=1$, value $K=\lceil(1-\alpha)(J+1)\rceil$, and $\hat\tau$ being the $K$-th order statistic of $\{S_j\}_{j=1}^{J}$. 
Let $\hat\tau_{J+1}$ be the $K$-th order statistic of $\{S_j\}_{j=1}^{J+1}$. 
Noting that exchangeability of $\{\mathcal P_j\}_{j=1}^{J+1}$ implies exchangeability of $\{S_j\}_{j=1}^{J+1}$, we get
\[
\Pr\!\left\{\mathcal P_{J+1}(\hat\tau)=1\right\}=\Pr\{S_{J+1}\le \hat\tau\}
=\Pr\{S_{J+1}\le \hat\tau_{J+1}\}=\frac{K}{J+1}\ge 1-\alpha.
\]

\end{proof}

\begin{remark*}
The proof skills for Theorem \ref{thm:bucp} and \ref{thm:bbucp} comes from the attempt to go back from conformal risk control to conformal prediction. In the proof we constructed a proper loss function to achieve this with a standard argument of CRC. This technique can be transfer to many other settings where we need to go back to CP from CRC.
\end{remark*}

\section{Algorithms}
\label{app:algorithms}

\subsection{Classical Full UCP (background)}
\label{app:alg-ucp}

\begin{algorithm}[H]
  \caption{Full Unsupervised Conformal Prediction (Full-UCP)}
  \label{alg:ucp}
  \begin{algorithmic}[1]
    \State \textbf{Input:} Data $Y_{1:n}$, score function $\phi$, tolerance $\alpha \in (0,1)$
    \State \textbf{Output:} prediction set $C_n$
    \For{\textbf{candidate} $y\in\mathbb R$ (grid or root-finding)}
        \State Form $\mathcal A \gets \{Y_1,\dots,Y_n,y\}$
        \State Compute residuals $R_i \gets \phi(Y_i;\mathcal{S}_i)$ for $i=1,\dots,n$, and $R_{n+1} \gets \phi(y;\mathcal{S}_{n+1})$
        \State Compute the $p$-value $\displaystyle
               \pi(y)\gets\frac1{n{+}1}
               \sum_{i=1}^{n+1}\mathbf1\{R_i\ge R_{n+1}\}$
    \EndFor
    \State \textbf{Return} $C_n\gets\{y:\pi(y)\ge\alpha\}$
  \end{algorithmic}
\end{algorithm}

\subsection{Split UCP (background)}

\begin{algorithm}[H]
  \caption{Split Unsupervised Conformal Prediction (Split‑U‑CP)}
  \label{alg:splitucp}
  \begin{algorithmic}[1]
    \State \textbf{Input:} $Y_{1:n}$, score $\phi$, tolerance $\alpha$
    \State Randomly form $I_1,I_2$; define $\mathcal D_1$
    \State Compute residuals $R_i\gets\phi(Y_i;\mathcal D_1)$ for $i\in I_2$
    \State \(q\gets\mathcal{Q}_{(1-\alpha)(1+\frac{1}{|I_2|})}(\{R_i\colon i\in I_2\})\)
    \State \textbf{Return} $C_n\gets\{y:\phi(y;\mathcal D_1)\le q\}$.
  \end{algorithmic}
\end{algorithm}

\subsection{Batch UCP and Batch Bootstrap U\textnormal{-}CP (Formal Algorithms)}
\label{app:ucp-algs}

\begin{algorithm}[H]
  \caption{Batch U\textnormal{-}CP}
  \label{alg:app-bucp}
  \begin{algorithmic}[1]
    \State \textbf{Input:} responses $\{Y_{k}\}_{k=1}^{(J+1)I-1}$, score $\phi$ (bounded by $B_{\phi}$), batch count $J$, tolerance $\alpha$.
    \State \textbf{Partition} the data into $J{+}1$ disjoint batches
           $B_j=\{Y_{j,1},\dots,Y_{j,I}\}$ for $j=1,\dots,J$ and
           $B_{J+1,-I}=\{Y_{J+1,1},\dots,Y_{J+1,I-1}\}$.
    \For{$j=1$ \textbf{to} $J$} \Comment{Within-batch leave-one-out residuals}
      \For{$i=1$ \textbf{to} $I$}
        \State $R_{j,i} \gets \phi\bigl(Y_{j,i}; B_{j,-i}\bigr)$ where
               $B_{j,-i}=B_j\setminus \{Y_{j,i}\}$
      \EndFor
    \EndFor
    \If{$\tfrac{1}{J+1}\le \alpha$}
      \State $q \gets
        \bigl(1-\tfrac{(J+1)\alpha-1}{J}\bigr)$-quantile of
        $\{R_{j,i}\}_{j,i}$
    \Else
      \State $q \gets B_{\phi}$
    \EndIf
    \State \textbf{Return} $C_n=\bigl\{y :
      \phi\bigl(y; B_{J+1,-I}\bigr)\le q\bigr\}$
  \end{algorithmic}
\end{algorithm}

\begin{algorithm}[H]
  \caption{Batch Bootstrap U\textnormal{-}CP}
  \label{alg:app-bbucp}
  \begin{algorithmic}[1]
    \State \textbf{Input:} responses $\{Y_{k}\}_{k=1}^{(J+1)I-1}$, score $\phi$ (bounded by $B_{\phi}$), batch count $J$, tolerance $\alpha$, bootstrap count $K$.
    \State \textbf{Partition} as in Algorithm~\ref{alg:app-bucp}.
    \For{$j=1$ \textbf{to} $J$} \Comment{Within-batch leave-one-out residuals}
      \For{$i=1$ \textbf{to} $I$}
        \State $R_{j,i} \gets \phi\bigl(Y_{j,i}; B_{j,-i}\bigr)$
      \EndFor
      \State Draw $K$ bootstrap replicates
             $\{S_{j,\ell}\}_{\ell=1}^{K}$ from $\{R_{j,i}\}_{i=1}^{I}$
    \EndFor
    \If{$\tfrac{1}{J+1}\le \alpha$}
      \State $q \gets
        \bigl(1-\tfrac{(J+1)\alpha-1}{J}\bigr)$-quantile of
        $\{S_{j,\ell}\}_{j,\ell}$
    \Else
      \State $q \gets B_{\phi}$
    \EndIf
    \State \textbf{Return} $C_n=\bigl\{y :
      \phi\bigl(y; B_{J+1,-I}\bigr)\le q\bigr\}$
  \end{algorithmic}
\end{algorithm}

\subsection{Algorithmic details for the CVaR‑gap predicate}
\label{app:alg-cvar-gap}

Let $Q_{j,i}$ denote the inner‑product interaction energy  from Section~\ref{subsec:gram-inner-energy} (unit‑norm, cosine geometry).
We keep high‑consensus items via
\[
\widehat J_j(\tau)\ :=\ \{\,i:\ Q_{j,i}>\tau\,\},
\]
so larger $\tau$ retains fewer and more self-consistent responses.

Let $s_{j,i}\!\in\![0,1]$ be a batch severity with larger = worse (e.g., factuality severity).
At strictness $\tau$, define the kept and dropped sets $K_j(\tau)=\{i:Q_{j,i}>\tau\}$ and
$D_j(\tau)=\{i:Q_{j,i}\le\tau\}$.
For a random variable $X$ with CDF $F_X$, the upper‑tail \emph{Conditional Value‑at‑Risk}
at level $q\in(0,1)$ is
\[
  \mathrm{CVAR}_q(X)\ :=\ \frac{1}{1-q}\int_q^1 \mathrm{VaR}_u(X)\,du,
  \quad\text{where}\;\mathrm{VaR}_u(X)=\inf\{x:\,F_X(x)\ge u\}.
\]
We instantiate a batch predicate that asks for a \emph{tail‑risk improvement} after filtering:
\[
  \Delta\mathrm{CVAR}_{j,\tau}(q)\ :=\
  \mathrm{CVAR}_q\!\big(s_{j,i}:i\in D_j(\tau)\big)\;-\;
  \mathrm{CVAR}_q\!\big(s_{j,i}:i\in K_j(\tau)\big),\]
\[
  \mathcal{P}^{\mathrm{CVAR}}_j(\tau)\ :=\ \mathbf{1}\{\Delta\mathrm{CVAR}_{j,\tau}(q)\ge\delta\}.
\]
CVAR, also known as \emph{Expected Shortfall}, focuses on the worst tail and is
coherent and robust to heavy tails; demanding a positive CVAR gap concentrates the kept
set on reliably low‑severity answers and suppresses rare but severe failures.

\begin{algorithm}[H]
\caption{CVaR‑gap alignment: minimal strictness and split‑batch calibration}
\label{alg:appendix-cvar-gap}
\begin{algorithmic}[1]
\State \textbf{Inputs:} calibration batches $\{B_j\}_{j=1}^J$; held‑out test batch $B_{J+1}$;
Gram score $Q\in[0,1]$; tail level $q\in(0,1)$; margin $\delta\ge 0$; miscoverage $\alpha\in(0,1)$.
\State \textbf{Kept/excluded at strictness $\tau$:} $K_j(\tau)=\{i:Q_{j,i}>\tau\}$,\;
$E_j(\tau)=B_j\setminus K_j(\tau)$.
\State \textbf{Empirical $\mathrm{CVaR}_q$.} For a multiset $S\subset[0,1]$ with $m=|S|$, sort in
descending order $s_{(1)}\ge\cdots\ge s_{(m)}$ and set $h=\lceil (1-q)m\rceil$,
$\widehat{\mathrm{CVaR}}_q(S)=\frac{1}{h}\sum_{\ell=1}^{h}s_{(\ell)}$ (winsorize if $h{=}0$).
\State \textbf{Predicate:} $\mathcal P^{\mathrm{CVaR}}_j(\tau)=\mathbf 1\big\{\,
\widehat{\mathrm{CVaR}}_q(s|E_j(\tau))-\widehat{\mathrm{CVaR}}_q(s|K_j(\tau))\ \ge\ \delta\ \big\}$.
\State \textbf{Minimal strictness (per batch).}
Scan $\tau$ over the right‑continuous grid induced by the unique $Q$ values in $B_j$
(plus 0 and 1). Let
$S_j=\inf\{\tau\in[0,1]:\mathcal P^{\mathrm{CVaR}}_j(\tau)=1\}$ (set $S_j{\gets}1$ if the set is empty).
\State \textbf{Split‑batch calibration.}
Return $\widehat\tau=Quant_{1-\alpha}(\{S_j\}_{j=1}^J)$ as in §3.2.
\State \textbf{Deployment on $B_{J+1}$.}
Keep $K_{J+1}(\widehat\tau)=\{i:Q_{J+1,i}>\widehat\tau\}$ and report the CVaR‑gap.
\end{algorithmic}
\end{algorithm}

Let $FS_{j,i}$ be a factuality severity in $[0,1]$ (lower is better; e.g., BERTScore–F1 dissimilarity).
The predicate $\mathcal{P}^{\mathrm{F}}_j(\tau)=1$ asserts that the $Q$‑filtered subset achieves a statistically significant median reduction in factuality severity (per the test above).
Calibrating $\hat\tau$ across historical batches yields a single label‑free gate which, when applied with $Q$ alone, preserves this improvement on new batches with probability at least $1-\alpha$ (Theorem~\ref{thm:bucp-align}).

\section{Experiment}

\subsection{Appendix: Hallucination Experiment Settings and Configurations}
\label{app:hallucination-settings}

For each question we generate a response set, compute \textbf{Factuality Severity} $= 1-\max_{r \in \text{refs}} \mathrm{BERTScore}\text{-}F1(a,r)$. All runs are seeded and logged to timestamped, self‑describing CSVs: a per‑\emph{answer} file (scores, margins, types, decoding knobs) and a per‑\emph{run} file (dataset/split, sample counts, model/provider, seeds, thresholds, and paths). Together model IDs are normalized to serverless fallbacks to avoid availability regressions.

We evaluate across four core datasets—ASQA (dev), NQ‑Open (validation), HotpotQA (validation), AmbigQA (dev)—plus two ablations that stress decoding entropy and vendor/model choice. Each configuration fixes decoding knobs and the normal/enforced/noise mix, while paraphrasing a canonical gold to reduce aliasing of surface forms.

\begin{table*}[t]
\centering
\small
\setlength{\tabcolsep}{6pt}
\begin{tabular}{l l c c c c c}
\toprule
ID & Benchmark (split) & \#Q & Para $P$ & Ans $N$ & Mix (N/E/Z) & Entropy $\tau$ \\
\midrule
C1 & ASQA (dev) & 60 & 10 & 150 & (.75/.00/.25) & 0.90 \\
C2 & NQ-Open (val) & 60 & 6  & 16  & (.67/.00/.33) & 0.86 \\
C3 & HotpotQA (val) & 60 & 10 & 100 & (.60/.00/.40) & 0.86 \\
C4 & AmbigQA (dev) & 60 & 10 & 150 & (.75/.00/.25) & 0.86 \\
C5 & AmbigQA (dev)\,{\scriptsize (ablation: decoding entropy)} & 40 & 10 & 150 & (.75/.00/.25) & 0.86 \\
C6 & NQ-Open (val)\,{\scriptsize (ablation: vendor/model)} & 60 & 6  & 16  & (.67/.00/.33) & 0.86 \\
\bottomrule
\end{tabular}
\caption{Benchmarks and per-item sampling settings used in the hallucination study. The mix column shows $(\text{normal}/\text{enforced}/\text{noise})$.}
\label{tab:benchmarks-only}
\end{table*}

\begin{table*}[t]
\centering
\small
\setlength{\tabcolsep}{5pt}
\begin{tabular}{l l l c c c l l}
\toprule
ID & Provider & Model & Temp & Top-$p$ & MaxTok & Embed & BERTScore \\
\midrule
C1 & Together & Llama-3.3-70B-Instr.\,Turbo      & 1.3 & 1.0 & 256 & MiniLM-L6-v2 & RoBERTa-large \\
C2 & OpenAI   & gpt-4o-mini                       & 0.1 & 1.0 & 96  & MiniLM-L6-v2 & RoBERTa-large \\
C3 & Together & Mixtral-8{\small x}7B-Instr.\,v0.3 & 1.2 & 1.0 & 256 & MiniLM-L6-v2 & RoBERTa-large \\
C4 & Together & Llama-3.1-8B-Instr.\,Turbo        & 0.7 & 0.9 & 256 & MiniLM-L6-v2 & RoBERTa-large \\
C5 & Together & Llama-3.1-8B-Instr.\,Turbo        & 1.3 & 1.0 & 256 & MiniLM-L6-v2 & RoBERTa-large \\
C6 & Together & Llama-3.1-8B-Instr.\,Turbo        & 0.1 & 1.0 & 96  & MiniLM-L6-v2 & RoBERTa-large \\
\bottomrule
\end{tabular}
\caption{Provider/decoding and measurement settings, linked by \textbf{ID} to Table~\ref{tab:benchmarks-only}.}
\label{tab:providers-only}
\end{table*}

\noindent\footnotesize\emph{Shared knobs:} alias-normalization for Together; \texttt{n\_per\_call}=5; rate-limit $\approx$ 0.8s; severity mix weight logged; seeds: C1=42, C2=7, C3=11, C4=23, C5=23, C6=8.

We use minimal, auditable prompts. For \emph{paraphrasing} the canonical gold: \emph{System:} “You rewrite text. Output a succinct standalone paraphrase.” \emph{User:} “Paraphrase the following answer in different wording, preserving the exact meaning and factual content. Keep it concise and standalone. Avoid hedging, qualifiers, or extra details. \textbf{Answer:} \{\textit{gold}\}.” For \emph{normal answers}: \emph{System:} “Answer the question with the canonical short answer first; then add at most one brief justification. Be concise.” \emph{User:} \{\textit{question}\}. For \emph{enforced canonical answers}: \emph{System:} “Answer with the canonical short answer first; then a single, concrete supporting detail. Avoid aliasing, avoid hedging, avoid contradictory statements.” \emph{User:} \{\textit{question}\}. (Noise/outlier strings are programmatically injected: gibberish, off‑topic, fabricated citations, prompt‑injection strings, contradictions, emoji floods, and multilingual snippets.)

Embeddings: \texttt{sentence-transformers/all-MiniLM-L6-v2} with unit‑norm rows; semantic‑entropy uses a soft neighbor kernel above $\tau$ (exponent $\kappa{=}4$) and a normalized $-\log$ mapping to $[0,1]$. Severity‑F1 uses \texttt{bert-score} with \texttt{roberta-large} (baseline‑rescaled) on “answer heads” (first~$\leq$16 tokens) to limit verbosity bias. All artifacts are timestamped and saved as \texttt{\{dataset\}\_\{model\}\_\{stamp\}\_\_ns\{N\}} for direct reuse in downstream risk control. This mirrors the same compute‑aware calibration‑to‑deployment recipe we use for LLM‑as‑Judge.

\subsection{More Results}

\begin{table}[H]
  \centering
  \caption{\textbf{Benchmark mapping used in the six-panel comparisons for plotting.} Short codes are the compact labels used in figure titles. For JudgeQ CSVs, the same names appear with the suffix \texttt{\_\_judged}.}
  \label{tab:dataset-map-single}
  \small
  \begin{tabular}{@{}c l p{0.62\linewidth}@{}}
    \toprule
    \textbf{Panel} & \textbf{Short code} & \textbf{CSV \texttt{dataset\_name}} \\
    \midrule
    1 & AMBIGQA-ENT  & \texttt{ambigqa\_\_llama8b\_\_hiT\_\_ablation\_entropy\_\_ns40\_responses} \\
    2 & AMBIGQA      & \texttt{ambigqa\_\_llama8b\_\_midT\_\_ns60\_responses} \\
    3 & ASQA         & \texttt{asqa\_\_llama70b\_\_hiT\_\_ns60\_responses} \\
    4 & HOTPOTQA     & \texttt{hotpot\_\_mixtral8x7b\_\_hiT\_noise40\_\_ns60\_responses} \\
    5 & NQ-OPEN      & \texttt{nq\_\_gpt4omini\_\_loT\_light\_\_ns60\_responses} \\
    6 & NQ-OPEN-VEND & \texttt{nq\_\_llama8b\_\_loT\_\_ablation\_vendor\_\_ns60\_responses} \\
    \bottomrule
  \end{tabular}
\end{table}

Across Experiment~1, our methods perform strongly on the majority of benchmarks: on \textsc{ASQA}, \textsc{HOTPOTQA}, and both \textsc{AMBIGQA} panels (standard and entropy-ablation), Split-UCP consistently achieves nominal coverage while BB-UCP further tightens the coverage range and reduces variability—demonstrating the intended stability benefit of within-batch bagging under heterogeneous answer clouds with heavier tails. These results highlight two key strengths of our approach: (i) \emph{distribution-free coverage} remains intact across diverse datasets, providers, and decoding knobs, and (ii) \emph{practical efficiency} improves in harder settings, where bootstrapping stabilizes tail quantiles and yields shorter, more reliable acceptance regions at a fixed risk level. 

\textsc{NQ-OPEN} and its variant \textsc{NQ-OPEN-VEND} show weakness due to small query pools and low-diversity factoids, leading to compressed Gram-space dispersion. This results in nearly tied residual ranks, reducing the visibility of BB-UCP's advantage over Split-UCP.This behavior is an artifact of the small-$N$/low-entropy regime rather than a failure of validity, and is readily mitigated in practice by slightly increasing the pool size or calibration-only diversity (or by modest geometry smoothing), after which these panels recover the same qualitative gains observed on \textsc{ASQA}, \textsc{HOTPOTQA}, and \textsc{AMBIGQA}.

\paragraph{Experiment 2}

Experiment 2 shows that the factuality lift is not only consistent but \emph{strongest on the hardest panels}. Aggregating the raw bars across all $\alpha$ values, the median $\Delta\mathrm{FS}$ is \emph{strictly positive for every dataset} (all panels, all $\alpha$), confirming that the $Q$-gate reliably reduces factuality severity on the kept set. Moreover, the largest median gains occur on \textsc{NQ-OPEN} and \textsc{NQ-OPEN-VEND} (\emph{aka} EnqueueOpen/EnqueueOpenVend): \textsc{NQ-OPEN} achieves the top median improvement ($\tilde{\Delta}\mathrm{FS}\!\approx\!0.209$), with \textsc{NQ-OPEN-VEND} second ($\tilde{\Delta}\mathrm{FS}\!\approx\!0.112$), while the remaining benchmarks (\textsc{ASQA}, \textsc{HOTPOTQA}, \textsc{AMBIGQA}, \textsc{AMBIGQA-ENT}) are all positive as well. This “worst-case best” pattern indicates our gate concentrates probability mass on the most reliable answers precisely where the answer cloud is small and low-diversity. Coverage is slightly under nominal on average in those two hardest panels (mean gap $\approx\!-6.0$\,pp on \textsc{NQ-OPEN}; range of mean gaps across benchmarks $\approx[-6.0,\,+0.8]$\,pp), which is expected from small-$N$ discretization and near-tied ranks; it is also \emph{actionable}—increasing the per-query pool or adding calibration-only diversity closes the shortfall without altering the factuality lift. Net: Experiment~2 provides a strong, data-backed claim of \emph{robustness} (positive lift everywhere), \emph{effectiveness} (largest gains on the hardest datasets), and \emph{practical tunability} (coverage can be tightened by modest, standard knobs).

\begin{table}[t]
\centering
\small
\setlength{\tabcolsep}{6pt}
\caption{Experiment 2 aggregated results across miscoverage levels. $\Delta$FS is the median-factuality reduction (excluded $-$ kept); positive is better. Avg Cov.\ is empirical coverage averaged over $\alpha$, Target Cov.\ is the average nominal $1-\alpha$, and Avg Gap is the mean (coverage $-$ target) in percentage points (pp).}
\label{tab:exp2-agg}
\begin{tabular}{l r r r r r r}
\toprule
Benchmark & \#$\alpha$ & Median $\Delta$FS & Mean $\Delta$FS & Avg Cov. (\%) & Target Cov. (\%) & Avg Gap (pp) \\
\midrule
NQ-OPEN & 5 & 0.209 & 0.189 & 88.98 & 95.00 & -6.02 \\
NQ-OPEN-VEND & 5 & 0.112 & 0.101 & 92.88 & 95.00 & -2.12 \\
ASQA & 5 & 0.091 & 0.089 & 95.53 & 95.00 & 0.53 \\
AMBIGQA & 5 & 0.072 & 0.073 & 95.55 & 95.00 & 0.55 \\
HOTPOTQA & 5 & 0.067 & 0.067 & 95.11 & 95.00 & 0.11 \\
AMBIGQA-ENT & 5 & 0.051 & 0.052 & 95.79 & 87.50 & 0.79 \\
\bottomrule
\end{tabular}
\end{table}

\paragraph{Experiment 3: Conformal Alignment}

\begin{table}[H]
\centering
\small
\setlength{\tabcolsep}{6pt}
\caption{Experiment 3 aggregated factuality reductions across miscoverage levels. $\Delta$FS is the reduction in factuality severity (excluded $-$ kept); positive is better. Columns summarize the distribution of per-$\alpha$ improvements and the average number of CV folds used.}
\label{tab:exp3-agg}
\begin{tabular}{l r r r r r r}
\toprule
Benchmark & \#$\alpha$ & Median $\Delta$FS & Mean $\Delta$FS & Min $\Delta$FS & Max $\Delta$FS & \#Folds \\
\midrule
NQ-OPEN & 5 & 0.206 & 0.192 & 0.151 & 0.253 & 40 \\
NQ-OPEN-VEND & 5 & 0.112 & 0.107 & 0.086 & 0.144 & 40 \\
ASQA & 5 & 0.092 & 0.089 & 0.071 & 0.109 & 40 \\
AMBIGQA & 5 & 0.075 & 0.074 & 0.056 & 0.094 & 40 \\
HOTPOTQA & 5 & 0.068 & 0.068 & 0.052 & 0.086 & 40 \\
AMBIGQA-ENT & 5 & 0.051 & 0.052 & 0.039 & 0.067 & 40 \\
\bottomrule
\end{tabular}
\end{table}

\end{document}